 \def\spacingset#1{\renewcommand{\baselinestretch}%
{#1}\small\normalsize} \spacingset{1}
\newtheorem{theorem}{Theorem}
\newtheorem{lemma}[theorem]{Lemma}
\definecolor{Gray}{gray}{0.85}
\newcolumntype{g}{>{\columncolor{Gray}}c}
\newcolumntype{w}{>{\columncolor{white}}c}
\definecolor{LightCyan}{rgb}{0.88,1,1}
\newcommand{\tb}{\textcolor{black}}
\newcommand{\Z}{\mathbb{Z}}
\newcommand{\XX}{\mathbb{X}}
\newcommand{\PP}{\mathbb{P}}
\newcommand{\RR}{\mathbb{R}}
  \title{Kernel distance measures for time series, random fields and other structured data}
\author {Srinjoy Das \footnote{
School of Mathematical and Data Sciences, 
West Virginia University, 
Morgantown, WV 26506, USA, 
email: {\tt  srinjoy.das@mail.wvu.edu}}
\and
Hrushikesh N. Mhaskar \footnote{
Institute of Mathematical Sciences,
Claremont Graduate University,
Claremont, CA 91711, USA,
email: {\tt hrushikesh.mhaskar@cgu.edu}}
\and
Alexander Cloninger \footnote{
Department of Mathematics and 
Hagi{\u g}olou Data Science Institute,
   University of California---San Diego,
   La Jolla, CA 92093, USA
   email: {\tt acloninger@ucsd.edu}}
} 
\date{ } 
\begin{document}


\newcolumntype{g}{>{\columncolor{Gray}}c}
 \maketitle
 

\begin{abstract}
 \tb{This paper introduces {\bf kdiff}, a novel kernel-based measure for estimating distances between instances of time series, random fields and other forms of structured data. This measure is based on the idea of matching distributions that only overlap over a portion of their region of support.
Our proposed measure is inspired by {\bf MPdist} which has been previously proposed for such datasets and is constructed using Euclidean metrics, whereas {\bf kdiff} is constructed using non-linear kernel distances.
Also, 
{\bf kdiff} accounts for both self and cross similarities across the instances and is defined using a lower quantile of the distance distribution.  Comparing the cross similarity to self similarity allows for measures of similarity that are more robust to noise and partial occlusions of the relevant signals.
Our proposed measure {\bf kdiff} is a more general form of the well known kernel-based Maximum Mean Discrepancy ({\bf MMD}) distance estimated over the embeddings. Some theoretical results are provided for separability conditions using {\bf kdiff} as a distance measure for clustering and classification problems where the embedding distributions can be modeled as two component mixtures. Applications are demonstrated for clustering of synthetic and real-life time series and image data, and the performance of {\bf kdiff} is compared to competing distance measures for clustering.}
\end{abstract}
 

\section{Introduction and Motivation}

Clustering and classification tasks in applications such as time series and image processing are critically dependent on the distance measure used to identify similarities in the available data. In such contexts, several distance measures have been proposed in the literature:
\begin{itemize}
\item Point-to-point matching e.g. Euclidean distance or Dynamic Time Warping distance \cite{ratanamahatana2004everything, keogh2005exact}
\item \tb{Matching features of the series e.g. autocorrelation coefficients \cite{pierpalo2009autocorrelation}, Pearson correlation coefficients \cite{golay1998new}, periodograms \cite{alonso2014func}, extreme value behavior \cite{d2017fuzzy}}
\item Number of matching subsequences in the series \cite{gharghabi2018matrix}
\item Similarity of embedding distributions of the series \cite{brandmaier2011permutation}
\end{itemize}

 \tb{In this paper we consider distance measures for applications involving clustering, classification and related data mining tasks in time series, random fields and other forms of  possibly non i.i.d data.  In particular, we focus on problems where membership in a specific class is characterized by instances matching only over a portion of their region of support. In addition, the regions where such feature matching occurs may not be overlapping in time, or on the underlying grid of the random field. 
Distance measures must take these data characteristics into consideration when determining similarity in such applications. Previously {\bf MPdist} has been proposed as a distance measure for such time series datasets \cite{gharghabi2018matrix} which match only over part of their region of support and is constructed using Euclidean metrics. Inspired by {\bf MPdist}, we propose a new kernel-based distance measure {\bf kdiff} for estimating distances between instances of such univariate and multivariate time series, random field and other types of structured data.}


{For constructing {\bf kdiff}, we first create sliding window based embeddings over the given time series or random fields. We then estimate} a distance distribution by using a kernel-based distance measure between \tb{such embeddings over given pairs of data instances}. Finally the distance measure used in clustering, classification and related tasks is defined by a pre-specified lower quantile of this distance distribution. This kernel-based distance measure based on such embeddings can also be constructed using the Reproducing Kernel Hilbert Space (RKHS) based Maximum Mean Discrepancy (\textbf{MMD}) previously discussed in \cite{gretton2012kernel}.  Our kernel based \tb{measure {\bf kdiff} can be considered as a more general distance compared to RKHS MMD for applications where class instances match only over a part of the region of support}. More details about the connections \tb{between {\bf kdiff} and} RKHS MMD are provided later in the paper.  We also note that the kernel construction in \textbf{kdiff} allows for data-dependent kernel construction similar to \textbf{MMD} \cite{gretton2012optimal, jitkrittum2016interpretable, cheng2020two}, though we focus on isotropic localized kernels in this work and compare to standard \textbf{MMD}.

The rest of the paper is organized as follows. Section \ref{kdiff.mainidea} outlines the main idea and motivation behind the construction of our distance measure {\bf kdiff}. 
Section \ref{kdiff.mainidea} also outlines some theoretical results for separability of data \tb{using {\bf kdiff} as a distance measure} for clustering, classification and related tasks \tb{by modeling the embedding distributions derived from the original data as \tb{two component} mixtures.} 
Section \ref{kdiff.parameters} outlines some practical considerations and data-driven strategies to determine optimal parameters for the algorithm \tb{to estimate {\bf kdiff}.} Section \ref{kdiff.numwork} presets simulation results using {\bf kdiff} on both synthetic and real-life datasets and compares them with existing methods. Finally Section \ref{kdiff.conclusions} outlines some conclusions and directions for future work.



\section{Main Idea}
\label{kdiff.mainidea}

\subsection{Overview}
\label{kdiff.ss.overview}

Consider \tb{two} real-valued datasets $\{{\bf X}_{\underline t}, {\bf Y}_{\underline t} : \underline t \in \Z^k\}$ defined over a $k$-dimensional index set. These may in general be vector-valued random variables, and therefore ${\bf X}_{\underline t}$ and ${\bf Y}_{\underline t}$ can be considered as \tb{either univariate or multivariate time series, random fields or other types of structured data.}  Our problem of interest is where \tb{instances of } ${\bf X}_{\underline t}$ and ${\bf Y}_{\underline t}$ match with certain localized motifs $\{{\bf X}_{\underline t}: t\in  S\} \approx \{{\bf Y}_{\underline t}: t\in  S'\}$ for small localized index sets $S,S' \subset \Z^k$.
For both the univariate and multivariate cases, we can embed these data sets into some corresponding point clouds ${\bf X},{\bf Y} \subset \mathbb{R}^L$ via windowing with a size $L$ window, where $L$ can be determined from training or some other appropriate technique \cite{brandmaier2011permutation, bandt2002permutation}.  Once we have a window embedding of these data sets, we can define various distance measures on the resulting point clouds to define similarity between ${\bf X}_{\underline t}$ and ${\bf Y}_{\underline t}$. 

 A distance measure that has been proposed previously to determine similarity between two such time series embedded point clouds constructed over $\mathbb{R}^L$ is {\bf MPdist}  \cite{gharghabi2018matrix}. In this case, a cross-data distance measure, denoted $D^2$, can be constructed by using 1-nearest neighbor Euclidean distances between point clouds ${\bf X}$ and ${\bf Y}$ as below:

$$
d({x}) = \inf_{{y} \in {\bf Y}} || {x} - { y} ||, \ \  \forall {x} \in {\bf X}
$$

$$
d({y}) = \inf_{{x} \in {\bf X}} || {x} - {y} ||, \ \  \forall {y} \in {\bf Y}
$$

\begin{equation}
D^2 = \{d^2({z}) : z\in {\bf X}\cup {\bf Y}\}.
\label{eq.mpdist_crosssim}
\end{equation}

 \tb{In \cite{gharghabi2018matrix}, the distance measure {\bf MPdist} was estimated for univariate time series by choosing the $k^{th}$ smallest element in the set $D^2$.
In general, {\bf MPdist} can be constructed using a lower quantile of the distance distribution $D^2$.}

 \tb{Our proposed distance measure {\bf kdiff} generalizes {\bf MPdist} using a kernel-based construction, and by considering both cross-similarity and self-similarity.  Similar to {\bf MPdist}, we first construct sliding window based embeddings over the original data instances ${\bf X}_{\underline t}, \ {\bf Y}_{\underline t}$ and obtain corresponding point clouds ${\bf X}$ and ${\bf Y}$. For {\bf MPdist} the final distance is estimated based on cross-similarity between the embeddings ${\bf X},\ {\bf Y}$ as shown in Equation \ref{eq.mpdist_crosssim}. Our distance measure {\bf kdiff} differs from this in two ways:
\begin{itemize}
\item We use a kernel based similarity measure over the obtained sliding window based embeddings ${\bf X}$ and ${\bf Y}$ for {\bf kdiff}  instead of the Euclidean metric used in {\bf MPdist}.
\item For {\bf kdiff} the final distance is estimated based on both self and cross-similarities between the embeddings ${\bf X},\ {\bf Y}$ respectively. \tb{The inclusion of self-similarity in the construction of {\bf kdiff} as compared to only cross-similarity for {\bf MPdist} leads to better clustering performance for  data with reduced signal-to-noise ratio of the matching region versus the background. This is demonstrated empirically for both synthetic and real-life data in Section \ref{kdiff.numwork}.} 
\end{itemize}}

\subsection{The construction of \textbf{kdiff}}\label{bhag:kdiffconst}

To define our \textbf{kdff} statistic, we will begin with a discussion of general distributions defined on $\mathbb{R}^L$.  For the purposes of this paper, these can be assumed to be the distributions that the finite samples ${\bf X}$ are drawn from (in a non-iid fashion) and stitched together to form the time series ${\bf X}_{\underline t}$ (respectively for ${\bf Y}$ and ${\bf Y}_{\underline t}$).  

In general, we can define the distributions on be defined on $\XX$, which is a locally compact metric measure space with the metric $\rho$ and a distinguished probability measure $\nu^*$.   The term measure will denote a signed (or  positive with bounded total variation) Borel measure. 
We introduce a fixed, positive definite kernel $K:\XX\times\XX\to (0,\infty)$, $K\in C_0(\XX\times\XX)$. 
Since the kernel is fixed, the mention of this kernel will be omitted from notations, although the kernel plays a crucial role in our theory.  
Given any signed  measure (or positive measure having bounded variation) $\tau$ on $\XX$, we define the
\textbf{witness function} of $\tau$ by
\begin{equation}
\label{eq:potentialdef}
U(\tau)(z)=\int_\XX K(z,x)d\tau(x), \qquad z\in \XX,
\end{equation}
and similarly the magnitude of the witness function
\begin{equation}
\label{eq:witnessdef}
T(\tau)(z)=|U(\tau)(z)|, \qquad z\in\XX.
\end{equation}
In the context of defining a distance between $\mu_1$, $\mu_2$, we take $\tau = \mu_1-\mu_2$, which results in a witness function 
\begin{align*}
U(\mu_1-\mu_2)(z) = \mathbb{E}_{x\sim \mu_1} [K(z,x)] - \mathbb{E}_{y\sim \mu_2} [K(z,y)]. 
\end{align*}

To quantify where $T(\mu_1-\mu_2)$ is small, we define the cumulative distribution function (CDF) of a Borel measurable function $f:\XX\to\RR$ by
\begin{equation}
\label{eq:CDFdef}
\Lambda(f)(t)=\Lambda(\nu^*;f)(t)=\nu^*\left(\{z: |f(z)|< t\}\right), \qquad t\in [0,\infty),
\end{equation}
and its ``inverse'' CDF by
\begin{equation}
\label{eq:CDFinv}
f^\#(u)=\sup\{t \in \RR: \Lambda(f)(t)\le u\}, \qquad u\in [0,\infty).
\end{equation}
Both $\Lambda(f)$ and $f^\# $ is a non-decreasing functions, and $f^\#(u)$ defines the \textbf{$u$-th quartile} of $f$. 

Finally, we are prepared to define our \textbf{kdiff} distance between probability measures $\mu_1$, $\mu_2$.   Having defined $T(\mu_1-\mu_2)(z)$, we now define \textbf{kdiff} to be the $\alpha$ quantile of $T(\mu_1-\mu_2)$,
\begin{equation}
\label{eq:kdiffdef}
\mathbf{kdiff}(\mu_1,\mu_2;\alpha)=(T(\mu_1-\mu_2))^\#(\alpha), \qquad \alpha\in (0,1).
\end{equation}

The intuition of \eqref{eq:kdiffdef} is that, if $\mu_1=\mu_2$, the resulting {\bf kdiff} statistic will be zero.  But beyond this, if $T(\mu_1-\mu_2)(z)=0$ for a set $z\in A\subset \XX$ such that $\nu^*(A)>0$, then for a localized enough kernel, there exists a quantile $\alpha$ for which we can still have the resulting {\bf kdiff} statistic be close to zero.  This allows us to match distributions that agree over partial support.  This will be discussed more precisely in Section \ref{kdiff.separability}.

\subsection{Separability Theorems for \textbf{kdiff}}\label{kdiff.separability}

For the purposes of analyzing the \textbf{kdiff} statistic, we will focus on the setting of resolving mixture models of probabilities on $\XX$ when only one of the components agree. 
Accordingly, for any $\delta\in (0,1)$, we define $\PP_\delta$ to be the class of all probability measures $\mu$ on $\XX$ which can be expressed as $\mu=\delta \mu_F+(1-\delta)\mu_B$, where $\mu_F$ and $\mu_B$ are probability measures on $\XX$.
With the applications in the paper in mind, we will refer to $\mu_F$ as the \textit{foreground} and $\mu_B$ as the \textit{background} probabilities.
Our interest is in developing a test to see whether given two measures $\mu_1$ and $\mu_2$ in $\PP_\delta$, the corresponding foreground components agree. 
Clearly, the same discussion could also apply to the case when we wish to focus on the background components with obvious changes.  

We first present some preparatory material before reaching our desired statements.
For any subset $A\subseteq \XX$ and $x\in\XX$, we define
\begin{equation}
\label{eq:distancefromset}
\mathsf{dist}(A,x)=\inf_{y\in A}\rho(y,x).
\end{equation}

The support of a finite positive measure $\mu$, denoted by $\mathsf{supp}(\mu)$  is the set of all $x\in\XX$ such that $\mu(U)>0$ for all open subsets $U$ containing $x$. 
Clearly, $\mathsf{supp}(\mu)$ is a closed set. 
If $\sigma$ is a non-zero signed measure and $\sigma=\sigma^+-\sigma^-$ is the Jordan decomposition of $\sigma$ then we define $\mathsf{supp}(\sigma)=\mathsf{supp}(\sigma^+)\cup \mathsf{supp}(\sigma^-)$. 
If $f :\XX\to\RR$, we define
$$
\|f\|_\infty=\sup_{x\in\XX}|f(x)|.
$$

The following lemma summarizes some important but easy properties of quantities $\Lambda(f)$ and $f^\#$ defined in \eqref{eq:CDFdef} and \eqref{eq:CDFinv} respectively.
\begin{lemma}
\label{lemma:rearrangement}
{\rm (a)} For $t, u\in [0,\infty)$,
\begin{equation}
\label{eq:inversefn}
\Lambda(f)(t)\le u \Rightarrow t\le f^\#(u), \qquad u<\Lambda(f)(t) \Rightarrow  f^\#(u) \le t.
\end{equation}
{\rm (b)} If $\epsilon>0$, $f, g :\XX\to\RR$, and $\|f-g\|_\infty \le\epsilon$, then $\sup_{u\in\RR}|f^\#(u)-g^\#(u)| \le \epsilon$.
\end{lemma}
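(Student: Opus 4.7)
My plan is to treat the two parts separately; both are soft consequences of the definitions together with the monotonicity of $\Lambda(f)$, so no measure-theoretic machinery beyond the identification of sub-level sets should be needed.

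For part (a), I would unpack the definition $f^\#(u)=\sup\{s\in\RR:\Lambda(f)(s)\le u\}$. The first implication is essentially tautological: if $\Lambda(f)(t)\le u$ then $t$ belongs to the set whose supremum is $f^\#(u)$, so $t\le f^\#(u)$. The second implication is the contrapositive: suppose $f^\#(u)>t$. Then by the defining supremum there is some $s>t$ with $\Lambda(f)(s)\le u$, and monotonicity of $\Lambda(f)$ (which itself follows from $\{|f|<t\}\subseteq\{|f|<s\}$ when $t\le s$) gives $\Lambda(f)(t)\le\Lambda(f)(s)\le u$, contradicting $u<\Lambda(f)(t)$.

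For part (b), the key pointwise observation is that $\|f-g\|_\infty\le\epsilon$ implies $\bigl||f(z)|-|g(z)|\bigr|\le\epsilon$ for every $z\in\XX$. Hence for each $t\ge 0$,
\begin{equation*}
\{z:|f(z)|<t-\epsilon\}\subseteq \{z:|g(z)|<t\},
\end{equation*}
so $\Lambda(f)(t-\epsilon)\le\Lambda(g)(t)$, and by symmetry $\Lambda(g)(t-\epsilon)\le\Lambda(f)(t)$. Now I would bound $g^\#(u)-f^\#(u)$ as follows. Fix $u$ and any $t$ with $\Lambda(g)(t)\le u$; from the inclusion above, $\Lambda(f)(t-\epsilon)\le\Lambda(g)(t)\le u$, and part (a) then gives $t-\epsilon\le f^\#(u)$. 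Taking the supremum over such $t$ yields $g^\#(u)\le f^\#(u)+\epsilon$, and the symmetric argument gives $f^\#(u)\le g^\#(u)+\epsilon$, so $\sup_{u}|f^\#(u)-g^\#(u)|\le\epsilon$.

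I do not anticipate a real obstacle: the only place that requires care is the distinction between strict and non-strict inequalities in the definition of $\Lambda(f)$ and in the threshold shifts by $\epsilon$ in part (b). The inclusion $\{|f|<t-\epsilon\}\subseteq\{|g|<t\}$ is strict on both sides, which is exactly what is needed to keep the argument clean without any approximation by $t-\epsilon+\eta$. Everything else is bookkeeping with suprema and monotone functions.
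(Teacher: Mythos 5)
Your proof is correct: part (a) follows exactly as you say from the definition of $f^\#$ as a supremum together with the monotonicity of $\Lambda(f)$, and part (b) correctly exploits the inclusion $\{z:|f(z)|<t-\epsilon\}\subseteq\{z:|g(z)|<t\}$ plus part (a) to get the two one-sided bounds $g^\#(u)\le f^\#(u)+\epsilon$ and $f^\#(u)\le g^\#(u)+\epsilon$. The paper states this lemma without proof (as "easy properties"), and your argument is the standard one that the authors evidently have in mind; the only cosmetic point is that when $t-\epsilon<0$ the conclusion $t-\epsilon\le f^\#(u)$ holds trivially since $\Lambda(f)(0)=0$ forces $f^\#(u)\ge 0$, which handles the fact that $\Lambda$ is only defined on $[0,\infty)$.
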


Our goal is to investigate sufficient conditions on two measures in $\PP_\delta$ so that $\mathbf{kdiff}$ can distinguish if the foreground components of the measures are the same. 
For this purpose, we introduce some further notation, where we suppress the mention of certain quantities for brevity.
Let $\mu_j=\delta\mu_{j,F}+(1-\delta)\mu_{j,B}\in \PP_\delta$, $j=1,2$, and $\mathbb{S}_F=\mathsf{supp}(\mu_{1,F})\cup \mathsf{supp}(\mu_{2,F})$, and we define $\mathcal{S}^c = \mathbb{X}\setminus \mathcal{S}$.  
We define for $\eta,\theta>0$,
\begin{equation}
\label{eq:setdef}
\begin{aligned}
\mathcal{S}_B(\mu_1,\mu_2;\eta)&=\{z\in \mathbb{X} : T(\mu_{1,B}-\mu_{2,B})(z) <\eta\}, & \phi_B(\eta)&=\nu^*(\mathcal{S}_B(\mu_1,\mu_2;\eta)) \\
\mathcal{S}_F(\mu_1,\mu_2;\eta)&=\{z\in \mathbb{X} : T(\mu_{1,F}-\mu_{2,F})(z) <\eta\}, & \phi_F(\eta)&=\nu^*(\mathcal{S}_F(\mu_1,\mu_2;\eta)) \\
\mathcal{G}(\mu_1,\mu_2;\theta,\eta)&= \left(\mathcal{S}^c_F(\theta)\cap \mathcal{S}_B(\eta)\right) \cup \left(\mathcal{S}^c_B(\theta)\cap \mathcal{S}_F(\eta)\right), & \psi(\theta,\eta)&=\nu^*(\mathcal{G}(\mu_1,\mu_2;\theta,\eta))
\end{aligned}
\end{equation}

\begin{theorem}
\label{theo:kdiff}
Let $\delta\in (0,\frac{1}{2})$, $\mu_j=\delta\mu_{j,F}+(1-\delta)\mu_{j,B}\in \PP_\delta$ ($j=1,2$). \\[1ex]
{\rm (a)} 
If  $\eta>0$ and $\mu_{1,F}=\mu_{2,F}$ then for any $\alpha\le \phi_B(\eta)$, we have $\mathbf{kdiff}(\mu_1,\mu_2;\alpha)\le (1-\delta)\eta$.\\[1ex]
{\rm (b)} 
If $\eta>0$ such that $\phi_F\left(\frac{3(1-\delta)}{\delta} \eta\right)<1$ and $\psi\left(\frac{3(1-\delta)}{\delta} \eta,\eta\right)>0$, 
then $\mu_{1,F}\not=\mu_{2,F}$ and for any $\alpha$ with $$1- \psi\left(\frac{3(1-\delta)}{\delta}\eta, \eta\right)\le \alpha,$$ we have $\mathbf{kdiff}(\mu_1,\mu_2;\alpha)\ge 2(1-\delta)\eta$.
\end{theorem}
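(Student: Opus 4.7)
The engine of both parts is the linear decomposition
$$
U(\mu_1-\mu_2) = \delta\, U(\mu_{1,F}-\mu_{2,F}) + (1-\delta)\, U(\mu_{1,B}-\mu_{2,B}),
$$
which, via the triangle and reverse–triangle inequalities for $|\cdot|$, yields the pointwise bounds
$$
\bigl|\delta T_F(z) - (1-\delta)T_B(z)\bigr|\;\le\; T(\mu_1-\mu_2)(z)\;\le\;\delta T_F(z)+(1-\delta)T_B(z),
$$
where I abbreviate $T_F:=T(\mu_{1,F}-\mu_{2,F})$ and $T_B:=T(\mu_{1,B}-\mu_{2,B})$. Everything else follows by locating sublevel and super-level sets of $T(\mu_1-\mu_2)$ inside the sets $\mathcal{S}_F$, $\mathcal{S}_B$, $\mathcal{G}$ defined in \eqref{eq:setdef}, and then translating via Lemma~\ref{lemma:rearrangement}(a) between the CDF $\Lambda$ and the quantile $\#$.

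\noindent\emph{Part (a).} If $\mu_{1,F}=\mu_{2,F}$ then $T_F\equiv 0$, so $T(\mu_1-\mu_2)=(1-\delta)T_B$ identically. On $\mathcal{S}_B(\eta)$ we have $T_B<\eta$ and hence $T(\mu_1-\mu_2)<(1-\delta)\eta$, which gives the inclusion $\mathcal{S}_B(\eta)\subseteq\{z:T(\mu_1-\mu_2)(z)<(1-\delta)\eta\}$. Taking $\nu^*$-measure yields $\phi_B(\eta)\le \Lambda\bigl(T(\mu_1-\mu_2)\bigr)\bigl((1-\delta)\eta\bigr)$. For $\alpha\le\phi_B(\eta)$, the first implication of Lemma~\ref{lemma:rearrangement}(a) (applied with $u=\alpha$ and $t$ slightly larger than $(1-\delta)\eta$, passing to the limit) gives $(T(\mu_1-\mu_2))^\#(\alpha)\le(1-\delta)\eta$, which is the claim.

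\noindent\emph{Part (b).} Put $\theta=\frac{3(1-\delta)}{\delta}\eta$. First note that $\phi_F(\theta)<1$ already rules out $\mu_{1,F}=\mu_{2,F}$, since that equality would force $T_F\equiv 0$ and $\phi_F(\theta)=1$. The heart of the argument is to show
$$
\mathcal{G}(\mu_1,\mu_2;\theta,\eta)\subseteq\bigl\{z:T(\mu_1-\mu_2)(z)\ge 2(1-\delta)\eta\bigr\}.
$$
On $\mathcal{S}_F^c(\theta)\cap\mathcal{S}_B(\eta)$, the reverse triangle bound gives $T(\mu_1-\mu_2)\ge \delta\theta-(1-\delta)\eta=3(1-\delta)\eta-(1-\delta)\eta=2(1-\delta)\eta$ directly. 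On $\mathcal{S}_B^c(\theta)\cap\mathcal{S}_F(\eta)$ the same bound gives $T(\mu_1-\mu_2)\ge (1-\delta)\theta-\delta\eta$, so I need the algebraic inequality $(1-\delta)\theta-\delta\eta\ge 2(1-\delta)\eta$, equivalently $3(1-\delta)^2/\delta\ge 2(1-\delta)+\delta$, which reduces to $4\delta^2-8\delta+3\ge 0$, i.e.\ $(2\delta-1)(2\delta-3)\ge 0$. This holds throughout $\delta\in(0,\tfrac12)$, and it is the only place where the hypothesis $\delta<\tfrac12$ is genuinely used. This algebraic check is the single nontrivial step and motivates the particular choice of the constant $3$ in $\theta$.

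Taking complements in the above inclusion yields $\nu^*\bigl(\{z:T(\mu_1-\mu_2)(z)<2(1-\delta)\eta\}\bigr)\le 1-\psi(\theta,\eta)$, i.e.\ $\Lambda(T(\mu_1-\mu_2))(2(1-\delta)\eta)\le 1-\psi(\theta,\eta)$. Lemma~\ref{lemma:rearrangement}(a) then gives $2(1-\delta)\eta\le (T(\mu_1-\mu_2))^\#(1-\psi(\theta,\eta))$, and monotonicity of $f^\#$ together with $\alpha\ge 1-\psi(\theta,\eta)$ finishes the proof. The main obstacle is simply being careful with the strict/non-strict inequalities in the definitions of $\mathcal{S}_F,\mathcal{S}_B,\Lambda$, and invoking the two directions of Lemma~\ref{lemma:rearrangement}(a) appropriately; the two triangle-inequality estimates, the algebraic check above, and the choice $\theta=\frac{3(1-\delta)}{\delta}\eta$ do all the real work.
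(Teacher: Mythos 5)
Your proof follows essentially the same route as the paper: the decomposition $U(\mu_1-\mu_2)=\delta\, U(\mu_{1,F}-\mu_{2,F})+(1-\delta)\, U(\mu_{1,B}-\mu_{2,B})$ with triangle/reverse-triangle bounds, the two-case estimate showing $\mathcal{G}(\mu_1,\mu_2;\theta,\eta)\subseteq\{z: T(\mu_1-\mu_2)(z)\ge 2(1-\delta)\eta\}$ (your algebraic check $(2\delta-1)(2\delta-3)\ge 0$ is exactly the paper's observation that $\frac{3(1-\delta)^2-\delta^2}{\delta}>2(1-\delta)$ for $\delta<\tfrac12$), and the passage to quantiles via Lemma~\ref{lemma:rearrangement}(a), just as in the paper. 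The only slip is a miscitation in part (a): the upper bound $(T(\mu_1-\mu_2))^\#(\alpha)\le(1-\delta)\eta$ comes from the second implication in \eqref{eq:inversefn} (namely $u<\Lambda(f)(t)\Rightarrow f^\#(u)\le t$, applied with $t=(1-\delta)\eta$), not the first, and no limiting device is needed --- this is exactly how the paper concludes.
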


\begin{proof}\ 
To prove part (a), we observe that since $\mu_{1,F}=\mu_{2,F}$, $T(\mu_1-\mu_2)(z) = T(\mu_{1,B}-\mu_{2,B})(z)$ for all $z\in\XX$. By definition \eqref{eq:setdef},
$$
\mathcal{S}_B(\mu_1,\mu_2;\eta) = \{z\in\XX : T(\mu_1-\mu_2)(z) < (1-\delta) \eta\}.
$$
Therefore, $\phi_B(\eta)\le \Lambda(T(\mu_1-\mu_2))((1-\delta)\eta)$. 
In view of \eqref{eq:inversefn}, this proves part (a). 

To prove part (b), we will write
$$
\theta=\frac{3(1-\delta)}{\delta}\eta.
$$
Our hypothesis that $\phi_F(\theta)<1$ means that 
 $\mu_{1,F}\not=\mu_{2,F}$ and $\mathcal{S}^c_F(\theta)$ is nonempty.
For all $z\in \mathcal{S}^c_F(\theta)\cap \mathcal{S}_B(\eta)$,
$$
T(\mu_1-\mu_2)(z)\ge \delta |U(\mu_{1,F}-\mu_{2,F})(z)|- (1-\delta)|U(\mu_{1,B}-\mu_{2,B})(z)|>\delta\theta-(1-\delta)\eta\ge 2(1-\delta)\eta.
$$
Moreover, for $z\in \mathcal{S}^c_B(\theta)\cap \mathcal{S}_F(\eta)$, we also know that 
$$
T(\mu_1-\mu_2)(z) \ge (1-\delta)|U(\mu_{1,B}-\mu_{2,B})(z)| - \delta|U(\mu_{1,F}-\mu_{2,F})(z)| \ge (1-\delta)\theta - \delta\eta \ge \frac{3(1-\delta)^2-\delta^2}{\delta}\eta.
$$
Note that because $\delta<\frac{1}{2}$, we have $\frac{3(1-\delta)^2-\delta^2}{\delta} > 2(1-\delta)$.  So, this means that
$$
\{z\in\mathbb{X} : T(\mu_1-\mu_2)(z)< 2(1-\delta)\eta\}\subset \{z\in\mathbb{X}  :   z\not\in \mathcal{G}(\theta,\eta)\};
$$
This means that $\Lambda(T(\mu_1-\mu_2))(2(1-\delta)\eta)\le 1-\psi(\theta,\eta)$. 
Since $\alpha\ge  1-\psi(\theta,\eta)$, this estimate together with \eqref{eq:inversefn} leads to the conclusion in part (b).
\end{proof}

We wish to comment on the practicality of the constants $\mathcal{S}_F(\mu_1,\mu_2;\eta)$, $\mathcal{S}_B(\mu_1,\mu_2;\eta)$ and $\mathcal{G}(\mu_1,\mu_2;\theta,\eta)$.   We consider this with the simple setting where $K$ is a compactly supported localized kernel (e.g., indicator function of an $\epsilon$-ball) in order to avoid the discussion of tails.  We define the well-separated setting as the setting where $\rho(\mu_{1,B},\mu_{2,B})>\epsilon$ and $\rho(\mu_{1,B},\mu_{i,F})>\epsilon$.  For part (b), we'll also use $\rho(\mu_{1,F},\mu_{2,F})>\epsilon$ and all four measures are sufficiently concentrated, i.e., $\mu_{i,F}\left(\{z\in \mathbb{X} : T(\mu_{i,F})(z)\ge \theta\}\right)\ge 1-\xi$ and $\mu_{i,B}\left(\{z\in \mathbb{X} : T(\mu_{i,B})(z)\ge \theta\}\right)\ge 1-\xi$. 
We consider the results of Theorem \ref{theo:kdiff} in the well-separated setting with $\nu^*=\frac{1}{2}(\mu_1+\mu_2)$:
\begin{enumerate}
\item[(a)] $\mathcal{S}_B(\mu_1,\mu_2;\eta)$ measures how much the backgrounds overlap with one another.  In this setting,  $\phi_B(\eta)\ge\delta$ for any $\eta> 0$.  This is because $T(\mu_{1,B}-\mu_{2,B})(z)=0$ for all $z\in \mathsf{supp}(\mu_{i,F})$, and thus $z\in \mathcal{S}_B(\mu_1,\mu_2;\eta)$.  Since $\nu^*( \mathsf{supp}(\mu_{1,F})\cup \mathsf{supp}(\mu_{2,F}))=\delta$, this lower bounds $\phi_B(\eta)$.  This means for any $\alpha<\delta$, $\textbf{kdiff}(\mu_1,\mu_2;\alpha)=0$.
\item[(b)]
Because of the well-separated assumption, $\mathcal{S}^c_F(\theta) \subset \mathcal{S}_B(\eta)$.  This means that everywhere the foregrounds are sufficiently concentrated, the backgrounds must be sufficiently small.    Similarly, $\mathcal{S}^c_B(\theta) \subset \mathcal{S}_F(\eta)$.  Furthermore, the sets $\mathcal{S}^c_F(\theta)\cap \mathcal{S}_B(\eta)$ and $\mathcal{S}^c_B(\theta)\cap \mathcal{S}_F(\eta)$ by definition are disjoint when $\theta>\eta$.
  Thus we have
\begin{align*}
1-\psi(\theta,\eta) &= 1- \nu^*\left((\mathcal{S}^c_F(\theta)\cap \mathcal{S}_B(\eta) ) \cup (\mathcal{S}^c_B(\theta)\cap \mathcal{S}_F(\eta))\right)\\
&= 1 -  \nu^*(\mathcal{S}^c_F(\theta)\cap \mathcal{S}_B(\eta) ) - \nu^*(\mathcal{S}^c_B(\theta)\cap \mathcal{S}_F(\eta)) \\
&= 1 -  \nu^*(\mathcal{S}^c_F(\theta)) - \nu^*(\mathcal{S}^c_B(\theta)) \\
&\le 1 - \delta(1-\xi) - (1-\delta)(1-\xi) \\
&= \xi.
\end{align*}
Thus we can choose $\eta,\theta$ as large as possible to satisfy the assumptions, and even then for very small quantiles $\alpha> \xi$ we $\textbf{kdiff}(\mu_1,\mu_2;\alpha) > 2(1-\delta)\eta$.
\end{enumerate}
\noindent These above descriptions clarify the theorem in the simplest setting.  When the foreground distributions are small but concentrated, and far from the separate backgrounds, then the hypothesis of $\mu_{1,F}\stackrel{?}{=}\mu_{2,F}$ can be easily distinguished with $\textbf{kdiff}$ for almost all $\alpha<\delta$.

In practice, of course, we need to estimate $\mathbf{kidff}(\mu_1,\mu_2;\alpha)$ from samples taken from $\mu_1$ and $\mu_2$. 
In turn, this necessitates an estimation of the witness function of probability measure from  samples from this probability.
We need to do this separately for $\mu_1$ and $\mu_2$, but it is convenient to formulate the result for a generic probability measure $\mu$.
To estimate the error in the resulting approximation, we need to stipulate some further conditions enumerated below.
We will denote by $\mathbb{S}^*=\mathsf{supp}(\mu)$.
\begin{description}
\item[Essential compact support] For any $t>0$, there exists $R(t)>0$ such that
\begin{equation}
\label{eq:esscompact}
K(x,y)\le t, \qquad x, y\in\XX,\ \rho(x,y)\ge R(t).
\end{equation}
\item[Covering property] For $t>0$, let $\mathbb{B}(\mathbb{S}^*, t)=\{z\in \XX: \mathsf{dist}(\mathbb{S}^*, z)\le R(t)\}$. There exist $A,\beta>0$ such that for any $t>0$, the set $\mathbb{B}(\mathbb{S}, t)$ is contained in the union of at most $At^{-\beta}$ balls of radius $\le t$. 
\item[Lipschitz condition]
We have
\begin{equation}
\label{eq:lipcond}
\max_{(x,y)\in \XX\times \XX}K(x,y)+\max_{x,x',y,y'\in \XX}\left\{\frac{|K(x,y)-K(x',y')|}{\rho(x,x')+\rho(y,y')}\right\}\le 1.
\end{equation}
\end{description}
Then H\"offding's inequality leads to the following theorem.  
\begin{theorem}
\label{theo:hoeffdingtheo}
Let $\epsilon>0$, $M\ge 2$ be an integer, and $\mu$ be any probability measure on $\XX$ and $\{y_1,\cdots, y_M\}$ be i.i.d. samples from $\mu$. 
Then with $\mu$-probability $\ge 1-\epsilon$, we have
\begin{equation}
\label{eq:hoeffding}
\left\|U(\mu)(\circ)-\frac{1}{M}\sum_{j=1}^M K(\circ, y_j)\right\|_\infty \le 2\left\{\frac{\log(4^{\beta+1}A/\epsilon)}{M}\right\}^{1/2}.
\end{equation}
\end{theorem}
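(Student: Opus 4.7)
The plan is to combine Hoeffding's inequality with a covering argument: discretize a neighborhood of $\mathbb{S}^*$ into $\le At^{-\beta}$ points using the covering property, control each one by Hoeffding, extend from the discretization to the whole neighborhood via the Lipschitz condition, and handle the complement using the essential compact support bound.

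First I would dispose of the exterior. For any $z\in\XX\setminus \mathbb{B}(\mathbb{S}^*,t)$ we have $\rho(z,y)\ge R(t)$ for every $y\in\mathbb{S}^*$, and thus $K(z,y)\le t$ by \eqref{eq:esscompact}. Since $\mu$ is supported on $\mathbb{S}^*$ and the samples $y_1,\dots,y_M$ lie in $\mathbb{S}^*$ with $\mu^M$-probability one, both $|U(\mu)(z)|$ and $|M^{-1}\sum_j K(z,y_j)|$ are bounded by $t$, so the pointwise deviation on this region is at most $2t$ deterministically.

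Next I would discretize the interior. By the covering property I choose $\{z_1,\dots,z_N\}\subset \mathbb{B}(\mathbb{S}^*,t)$ with $N\le At^{-\beta}$ such that every $z\in \mathbb{B}(\mathbb{S}^*,t)$ lies within distance $t$ of some $z_i$. Fix a threshold $s>0$. At each $z_i$ the variables $K(z_i,y_1),\dots,K(z_i,y_M)$ are i.i.d.\ and bounded in $[0,1]$ by \eqref{eq:lipcond}, with mean $U(\mu)(z_i)$. Hoeffding's inequality gives
\[
\P\left(\left|U(\mu)(z_i)-\frac{1}{M}\sum_{j=1}^M K(z_i,y_j)\right|>s\right)\le 2e^{-2Ms^2},
\]
and a union bound over the $N$ net points produces a single event of probability at least $1-2At^{-\beta}e^{-2Ms^2}$ on which the deviation is at most $s$ simultaneously for all $z_i$. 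I would then invoke the Lipschitz condition to extend this discrete control to every $z\in\mathbb{B}(\mathbb{S}^*,t)$: letting $z_i$ be the nearest net point, so $\rho(z,z_i)\le t$, \eqref{eq:lipcond} gives $|K(z,y)-K(z_i,y)|\le t$ uniformly in $y$, which bounds both $|U(\mu)(z)-U(\mu)(z_i)|$ and the analogous empirical difference by $t$. Combining with the Hoeffding bound at $z_i$ yields an interior error at most $s+2t$, which also dominates the exterior bound $2t$.

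Finally I would tune the parameters. Taking $t=s/2$ makes the total error $s+2t=2s$, and substituting into the failure probability gives $2^{\beta+1}As^{-\beta}e^{-2Ms^2}$; requiring this to be at most $\epsilon$ and setting $s=\sqrt{\log(4^{\beta+1}A/\epsilon)/M}$ yields \eqref{eq:hoeffding}. The main obstacle is this last bookkeeping step: one must verify that the subleading $\log(1/s)$ term coming from the $s^{-\beta}$ prefactor in the union bound is absorbed into the constant $4^{\beta+1}$, which is precisely the reason for the factor of $2$ between $t$ and $s$ and for the specific exponent $\beta+1$ in the stated bound.
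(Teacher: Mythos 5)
Your skeleton --- kill the region far from $\mathsf{supp}(\mu)$ with \eqref{eq:esscompact}, apply Hoeffding at the centers of a $t$-net of $\mathbb{B}(\mathbb{S}^*,t)$ supplied by the covering property, and transfer to all of $\mathbb{B}(\mathbb{S}^*,t)$ through the Lipschitz condition \eqref{eq:lipcond} --- is the natural one, and it is the same covering-plus-Hoeffding strategy as the argument the paper defers to (the paper prints no proof of Theorem \ref{theo:hoeffdingtheo}, only the citation to the witness-function reference). The exterior step, the per-point bound $2e^{-2Ms^2}$, the union bound over $N\le At^{-\beta}$ centers, and the Lipschitz transfer costing $2t$ are all correct. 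The gap is exactly at the step you flagged as ``bookkeeping'': with $t=s/2$ and $s=\{\log(4^{\beta+1}A/\epsilon)/M\}^{1/2}$ one has $e^{-2Ms^2}=(\epsilon/(4^{\beta+1}A))^{2}$, so your failure-probability bound equals
\[
2^{\beta+1}A\,s^{-\beta}e^{-2Ms^{2}}=\frac{\epsilon^{2}\,s^{-\beta}}{2^{3\beta+3}A},
\]
which is $\le\epsilon$ only when $s\ge\bigl(\epsilon/(2^{3\beta+3}A)\bigr)^{1/\beta}$. Since your $s$ is of order $M^{-1/2}$, this fails for all sufficiently large $M$, while the theorem claims every $M\ge 2$. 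The $s^{-\beta}$ prefactor contributes $\beta\log(1/s)\approx\tfrac{\beta}{2}\log M$ to the exponent you must beat, and a term growing with $M$ cannot be absorbed into the fixed constant $4^{\beta+1}$; your assertion to the contrary is where the proof breaks.

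The obstruction is structural rather than arithmetic: at accuracy $s\asymp M^{-1/2}$ the per-point Hoeffding probability $e^{-2Ms^{2}}$ is a constant, while any net fine enough that the Lipschitz error $2t$ stays below $s$ contains on the order of $s^{-\beta}\asymp M^{\beta/2}$ points, so no single-scale choice of $t$ can make the union bound come out below $\epsilon$ uniformly in $M$. Two repairs are available. First, your argument as written does prove, honestly, the slightly weaker bound in which $M$ appears inside the logarithm (solve $2^{\beta+1}As^{-\beta}e^{-2Ms^{2}}\le\epsilon$ for $s$, giving an extra $\tfrac{\beta}{2}\log M$ under the square root). Second, to obtain an $M$-free logarithm as in \eqref{eq:hoeffding} you need a genuinely multi-scale argument: symmetrize and bound $\mathbb{E}\sup_{z}\bigl|U(\mu)(z)-\tfrac{1}{M}\sum_{j}K(z,y_{j})\bigr|$ by Dudley's entropy integral, which converges here because $\int_{0}^{1}\sqrt{\log(Au^{-\beta})}\,du<\infty$, and then concentrate around the mean with McDiarmid's inequality (replacing one $y_{j}$ moves the supremum by at most $1/M$ since $0<K\le1$); this yields the $M^{-1/2}$ rate with an $M$-independent entropy constant, though not obviously with the precise constants $2$ and $4^{\beta+1}$ stated in \eqref{eq:hoeffding}. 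Either way, the final tuning step is the load-bearing part of the theorem's specific form, and it is not yet established in your write-up.
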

The proof of Theorem \ref{theo:hoeffdingtheo} mirrors the results for the witness function in \cite{mhaskar2020witness}.

\subsection{\tb{Conclusions from Separability Theorems}}

To illustrate the benefit of the above theory, we recall the MMD distance measure between two probability measures $\mu_1$ and $\mu_2$ defined by
\begin{equation}
\label{eq:MMDdef}
\mbox{MMD}^2(\mu_1,\mu_2)=\int_\XX\int_\XX K(x,y)(d\mu_1(x)-d\mu_2(x))(d\mu_1(y)-d\mu_2(y)).
\end{equation}
When $\mu_1,\mu_2\in \PP_\delta$ and the foreground components $\mu_{1,F}=\mu_{2,F}$ then $\mu_1-\mu_2=(1-\delta)(\mu_{1,B}-\mu_{2,B})$ and
\begin{equation}
\label{eq:MMDdefect}
\mbox{MMD}^2(\mu_1,\mu_2)=(1-\delta)^2\mbox{MMD}^2(\mu_{1,B},\mu_{2,B}).
\end{equation}
Since $K$ is a positive definite kernel, it is thus impossible for $\mbox{MMD}^2(\mu_1,\mu_2)=0$ unless $\mu_{1,B}=\mu_{2,B}$. 
One of the  motivations for our construction is to devise a test statistic that can be arbitrarily small even if $\mu_{1,B}\not=\mu_{2,B}$.

The results derived above provide certain insights regarding when it is possible to perform tasks such as clustering and classification of data using distance measures such as {\bf kdiff} and {\bf MMD} based on the characteristics of their foreground and background distributions. The results in Theorem~\ref{theo:kdiff} show that, provided the backgrounds are sufficiently separated, the \textbf{kdiff} statistic will be significantly smaller when $\mu_{1,F} = \mu_{2,F}$ than when $\mu_{1,F}$ and $\mu_{2,F}$ are separated.

This enables {\bf kdiff} to perform accurate discrimination i.e. data belonging to the same class will be clustered correctly in this case. On the other hand, it is clear that even if $\mu_{1,F}=\mu_{2,F}$, $\mbox{MMD}^2(\mu_1,\mu_2)$ will still be highly dependent on the backgrounds.  In this paper we consider the case where data instances belonging to the same class have the same foreground but different background distributions. In such situations using synthetic and real life examples we demonstrate the comparative performance and effectiveness of {\bf kdiff} for clustering tasks versus other distance measures including $\mbox{MMD}$.


As a final note, we wish to mention the relationship between \textbf{MMD} and \textbf{kdiff}.   It can be shown that $\textbf{MMD}^2$ is the mean of the witness function with respect to $\nu^*=\frac{1}{2}(\mu_1+\mu_2)$, 
$\textbf{MMD}^2(\mu_1,\mu_2) = \mathbb{E}_{z\sim\nu^*}\left(|U(\mu_1-\mu_2)(z)|^2\right)$ \cite{jitkrittum2016interpretable, cloninger2018bounding}.
This is compared to our results for \textbf{kdiff}, or in particular $\textbf{kdiff}^2$.
Note that computing $\textbf{kdiff}(\mu_1,\mu_2;\alpha)^2$ is equivalent to computing $\textbf{kdiff}$ on the square of the witness function $T(\mu_1-\mu_2)(z) = |U(\mu_1-\mu_2)(z)|^2$, since quantiles depend only on the ordering of the underlying function.  This means the statistic $\textbf{kdiff}^2$ is simply taking the quantile of the square of the witness function, rather than the mean as in $\textbf{MMD}^2$.

\section{Estimation of Algorithm Parameters}
\label{kdiff.parameters}

The following parameters are required for estimation of the distance measure {\bf kdiff}:

\begin{itemize}
\item Length of sliding window {\bf $SL$} used to generate subsequences over \tb{given data} ({\bf embedding dimension}) 
\item Kernel bandwidth ({\bf $\sigma$}) of the Gaussian kernel $k(x,y)=e^{-\|x-y\|^2/2\sigma^2}$
\item Lower quantile $\alpha$ of the kernel-based distance distribution $T(z)$
\end{itemize}

\paragraph{Determining $SL$:}
  \tb{In this paper we demonstrate the application of {\bf kdiff} for clustering time series and random fields.} The sliding window length $SL$ is used to create subsequences \tb{(i.e. sliding window based embeddings)} over \tb{such} time series or random fields over which {\bf kdiff} is estimated. The number of subsequences formed depend on $SL$, the number of points in the time series or random field and the dimensionality of the data under consideration. Some examples are given as below:
\begin{itemize}
\item In case of a univariate time series of length $n$ \tb{if each subsequence is of length $L = SL$ then} there are $m = n - SL + 1$ embeddings
\item For a two dimensional $n$ x  $n$ random field \tb{if each subsequence has dimension $L = SL*SL$ then} there are $m = (n - SL + 1)^2$ embeddings
\item For a p-variate time series \tb{if each subsequence is of length $L = p * SL$ then} there are $m = n - L + 1$ embeddings
\end{itemize}

 The distance measure {\bf kdiff} is estimated over \tb{these} $m$ points in the $L$ dimensional embedding space. It is necessary to determine an optimal value of $SL$ to obtain accurate values of {\bf kdiff}. \tb{Very small} values of $SL$ may result in erroneous identification of the region where the time series or random field under consideration match. For example embeddings obtained in this manner may result in two dissimilar time series containing noise related fluctuations over a small region identified as "matching". On the other hand \tb{very high} values of $SL$ can lead to erroneous estimation of the distance distribution owing to less number of subsequences or sub-regions which results in incorrect estimates for {\bf kdiff}. \tb{As an optimal tradeoff between these competing considerations we determine the value of $SL$ based on the best clustering performance over a training set selected from the original data.}

\paragraph{Determining $\sigma$:}  Since {\bf kdiff} is a kernel-based similarity measure determination of the kernel bandwidth $\sigma$ is critical to the accuracy of estimation. In this case \tb{very small bandwidths for sliding window based embeddings ${\bf X}$ and ${\bf Y}$ derived from} two \tb{corresponding} time series or random fields can lead to incorrect estimates since only points in the immediate neighborhood of \tb{embeddings ${\bf X}$ and ${\bf Y}$} are considered in the estimation of the \textbf{kdiff} statistic. On the other hand very large bandwidths are also problematic since in this case any point ${\bf Z}$ becomes nearly equidistant from ${\bf X}$ and ${\bf Y}$ (here all points are considered in the embedding space), thereby causing the distance measure to lose sensitivity. \tb{To achieve a suitable tradeoff between these extremes we select $\sigma$ over a range of values of order equal to the k nearest neighbor distance over all points in the embedding space of ${\bf Z} = {\bf X} \cup {\bf Y}$ for a suitably chosen value of $k$. The optimal value of $\sigma$ is selected from this range based on the best clustering performance over a training set selected from the original data.} 

\paragraph{Determining $\alpha$:} The distance measure {\bf kdiff} is based on a lower quantile $\alpha$ of the estimated distance distribution over the embedding spaces of two time series or random fields. \tb{This quantile can be specified as a fraction of the total number of points in the distance distribution using either of the following methods below:}
\begin{itemize}
\item \tb{Based on exploratory data analysis, visual inspection or other methods if the extent of the matching portions of the time series, random fields or other data under investigation can be determined then $\alpha$ can be set as a fraction of the length or area of this matching region versus the overall span of the data}
\item \tb{For high dimensional time series or random fields $\alpha$ can be determined from a range of values based on the best clustering performance over a training set selected from the original data}
\end{itemize}


\section{Numerical work: simulations and real data}
\label{kdiff.numwork}

The effectiveness of the our novel \tb{distance measure} {\bf kdiff} for comparing two sets of data which match only partially over their region of support is estimated using \tb{kmedoids} clustering \cite{kaufmann1987clustering}. \tb{The kmedoids algorithm is a similar partitional clustering algorithm as kmeans which works by minimizing the distance between the points belonging to a cluster and the center of the cluster. However kmeans can work only with Euclidean distances or a distance measure which can be directly expressed in terms of Euclidean for example the cosine distance. In contrast the kmedoids algorithm can work with non Euclidean distance measures such as {\bf kdiff} and is also advantageous because the obtained cluster centers belong to one of the input data points thereby leading to greater interpretability of the results. For these reasons in this paper we consider kmedoids clustering with} $k=2$ classes and measure the accuracy of clustering for \tb{distance measures} {\bf kdiff}, mmd \cite{gretton2012kernel}, {\bf MPdist} \cite{gharghabi2018matrix} and dtw \cite{ratanamahatana2004everything, keogh2005exact} over synthetic and real time series and random field datasets as described in the following sections. \tb{Suitably chosen combinations of the parameters can be specified as described in Section \ref{kdiff.parameters} and the derived optimal values can then be used for measuring clustering performance with the test data using {\bf kdiff}}. Similar to {\bf kdiff}, \tb{distances measures} using Maximum Mean Discrepancy (mmd) and {\bf MPdist} are computed by first creating subsequences over the original time series or random fields. In both these cases the length of the sliding window $SL$ is determined based on the best clustering performance over a training set selected from the original data. Additionally for mmd which is also a kernel-based measure we determine the optimal kernel bandwidth ($\sigma$) based on training.

In this work we consider \tb{two synthetic and one real-life} datasets for measuring clustering performance with \tb{four distance measures} {\bf kdiff}, mmd, {\bf MPdist} and dtwd (Dynamic Time Warping distance).  For the synthetic datasets we generate the foregrounds and backgrounds as described in Section \ref{kdiff.separability} using autoregressive models of order $p$, denoted as AR($p$). These are models for a time series $W_t$ generated by 
$$
W_t = \sum_{i=1}^{p} \phi_i W_{t-i} + \epsilon_t, \qquad t = p+1, \ldots.
$$
where $\phi_1, \ldots, \phi_p$ are the $p$ coefficients of the AR($p$) model and $\epsilon_t$ can be i.i.d. Gaussian errors. We perform 50 \tb{Monte Carlo} runs over \tb{each} dataset and in each run we randomly divide the data into training and test sets. For each set of training data we determine the optimal values of the algorithm parameters based on the best clustering performance. \tb{Following this} we use these parameter values on the test data in each of the 50 runs. The final performance metric for a given distance measure is given by the total number of clustering errors for the test data over all 50 runs. \tb{The dtwclust package} \cite{sarda2017comparing} \tb{of R 3.6.2 has been used for implementation of the \tb{kmedoids} clustering algorithm and to evaluate the results of clustering}.

As a techincal note, as \textbf{MPdist} and \textbf{MMD} are generally computed as squared distances, we similarly work with $\textbf{kdiff}(\mu_1,\mu_2;\alpha)^2$ as the distance between distributions.  This is solely to ensure that the distances are based of Euclidean or kernel distances squares, and to ensure a fair comparison being fed into the kmedoids clustering algorithm.  Also as mentioned previously, computing $\textbf{kdiff}(\mu_1,\mu_2;\alpha)^2$ is equivalent to computing $\textbf{kdiff}$ on the square of the witness function $ |U(\mu_1-\mu_2)(z)|^2$, since quantiles depend only on the ordering of the underlying function.

\subsection{Simulation: Matching sub-regions in Univariate Time Series}
\label{kdiff.sim.univ_ts}

Data $Y_i$ for $i = 1,2,\ldots, 1000$ are simulated using the model (\ref{eq.ts_1d}). \tb{To generate this} the series $W_i$ are constructed via an AR(5) model driven by i.i.d errors $\sim N(0, 1)$. The AR(5) coefficients are set to $0.5, 0.1, 0.1, 0.1, 0.1$. 

\begin{equation}
Y_i = \mu + W_i 
\label{eq.ts_1d}
\end{equation}

 \tb{Following this} we form a {\it background} dataset  $X_{B_{j}}$  by generating $j = 1,2, \ldots, 21$ realizations of this data where the mean $\mu_j$ for realization $j$ is set as below:

\[
    \mu_j= 
\begin{cases}
    100*j & \text{if } j \geq 1 \ \text{and } \  j \leq 10\\
    100*(10-j) & \text{if } j \geq 11 \ \text{and } \  j \leq 20\\
    0              & \text{j=21}
\end{cases}
\]

 Next we generate a dataset $X_F$ consisting of 2 {\it foregrounds} $X_{FA}$ and $X_{FB}$ which enable forming the 2 classes to be considered for k-medoids clustering as follows.  For foreground $X_{FA}$ data $Y_i$ for $i = 1,2,\ldots, 50$ are simulated using the model (\ref{eq.ts_1d}). The series $W_i$ is constructed via an AR(1) model driven by i.i.d errors $\sim N(0, 1)$. The AR(1) coefficient is set to $0.1$ and $\mu=10$. For foreground $X_{FB}$ data $Y_i$ for $i = 1,2,\ldots, 25$ are simulated using the model (\ref{eq.ts_1d}). The series $W_i$ is constructed via an AR(1) model driven by i.i.d errors $\sim N(0, 1)$. The AR(1) coefficient is set to $0.1$ and $\mu=-10$. We \tb{then} form the {\it foreground} dataset $X_{F_{j}}$ by generating $j = 1,2, \ldots, 21$ realizations of this data as follows:


\[
    X_{F_{j}}= 
\begin{cases}
     X_{FA} & \text{if } j \ \text{mod 2 } == 1\\
     X_{FB}  & \text{if } j \ \text{mod 2 } == 0
\end{cases}
\]

 Finally the dataset used for clustering $Z_{ij}$ where $i = 1, 2, \ldots, 1000$ and $j = 1,2, \ldots, 21$ is formed by mixing {\it backgrounds} $X_{B}$ and {\it foregrounds} $X_{F}$ as follows:

\[
    Z_{ij}= 
\begin{cases}
     \sum_{i=1}^{50} X_{F_{ij}} + \sum_{i=51}^{1000} X_{B_{ij}} & \text{if } j \ \text{mod 2 } == 1\\
     \sum_{i=1}^{25} X_{F_{ij}} + \sum_{i=26}^{1000} X_{B_{ij}} & \text{if } j \ \text{mod 2 } == 0
 \end{cases}
\]

 The dataset $Z_{ij}$ formed in this manner consists of \tb{two} types of subregions ({\it foregrounds}) which define the \tb{two} classes used for k-medoids clustering. We perform 50 random splits of the dataset $Z_{ij}$ where each split consists of a training set of size 10 and a test set of size 11. The results for clustering are shown for the 4 distance measures in Table \ref{tab.ts_1d}.

\begin{table}
\centering
\caption{Clustering Performance for Univariate Time Series dataset with $\tau=1$}
\label{tab.ts_1d}
\begin{tabular}{|w|w|w|w|}
\hline
\tb{Distance \ Measure}  & Total Number of Errors & Percent Error \\
\hline
{\bf kdiff} & {0} &{0} \\
\hline
{mmd} & {219} &{39.8} \\
\hline
{{\bf MPdist}} & {0} &{0} \\
\hline
{dtwd} & {227} &{41.2} \\
\hline
\end{tabular}
\end{table}

 From the results it can be seen that both {\bf kdiff} and {\bf MPdist} produce the best clustering performance with $0$ errors for this dataset. This is attributed to the fact that the subregions of interest are well defined for both classes and using suitable values of parameters determined from training it is possible to accurately cluster all the time series data into \tb{two} separate groups. 
\tb{On the other hand the performance of mmd is inferior to both {\bf kdiff} and {\bf MPdist} because the {\it backgrounds} are well separated with different mean values for time series within and across the two classes.}
This results in time series \tb{even} belonging to the same class to be placed in separate clusters when mmd is used as a \tb{distance measure}. 
\tb{Similarly dtwd suffers from poor performance as this distance measure tends to \tb{place} time series with smaller separation between the mean background values in the same cluster.
However these may have distinct values for the {\it foregrounds} i.e. they can in general belong to different classes and as a result this causes errors during clustering.} 

 \tb{ {\bf \it Noise robustness} \ We explore the performance of the distance measures by considering noisy foregrounds. For foreground $X_{FA}$ data $Y_i$ for $i = 1,2,\ldots, 50$ are simulated using the model (\ref{eq.ts_1d}). The series $W_i$ is constructed via an AR(1) model driven by i.i.d errors $\sim N(0, 100)$. The AR(1) coefficient is set to $0.1$ and $\mu=10$. For foreground $X_{FB}$ data $Y_i$ for $i = 1,2,\ldots, 25$ are simulated using the model (\ref{eq.ts_1d}). The series $W_i$ is constructed via an AR(1) model driven by i.i.d errors $\sim N(0, 1)$. The AR(1) coefficient is set to $0.1$ and $\mu=-10$. Following this the foreground datasets $X_{F_{j}}$ and the dataset used for clustering $Z_{ij}$ where $i = 1,2,\ldots,1000$ and $j=1,2,\ldots,21$ are formed in the same manner as described earlier. We show example time series realizations for $\tau= 1$ and $10$ in Figures \ref{fig.ar_sd_1} and \ref{fig.ar_sd_10} respectively. Each figure contains plots of two time series with mean $=-10,10$ as per the construction of foreground $X_{FB}$ for the original and noisy case and show the relative separation between the realizations. }

\begin{figure}[!tbp]
  \centering
  \begin{minipage}[b]{0.4\textwidth}
    \includegraphics[width=2.8 in, height=2.5 in]{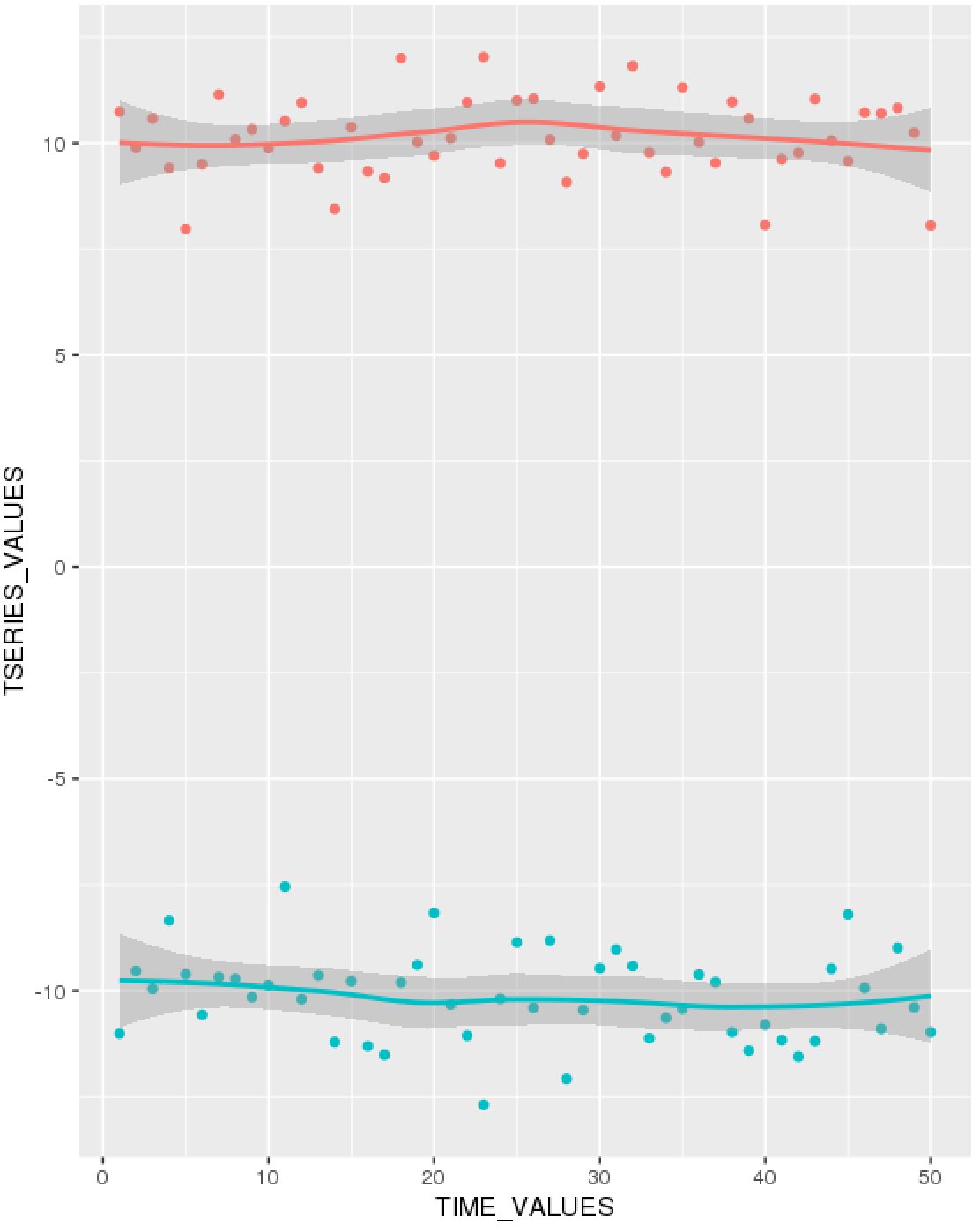}
    \caption{Foreground Time Series Realization with mean $=10,-10$ for $\tau=1$}
    \label{fig.ar_sd_1}
  \end{minipage}
  \hfill
  \begin{minipage}[b]{0.4\textwidth}
    \includegraphics[width=2.8 in, height=2.5 in]{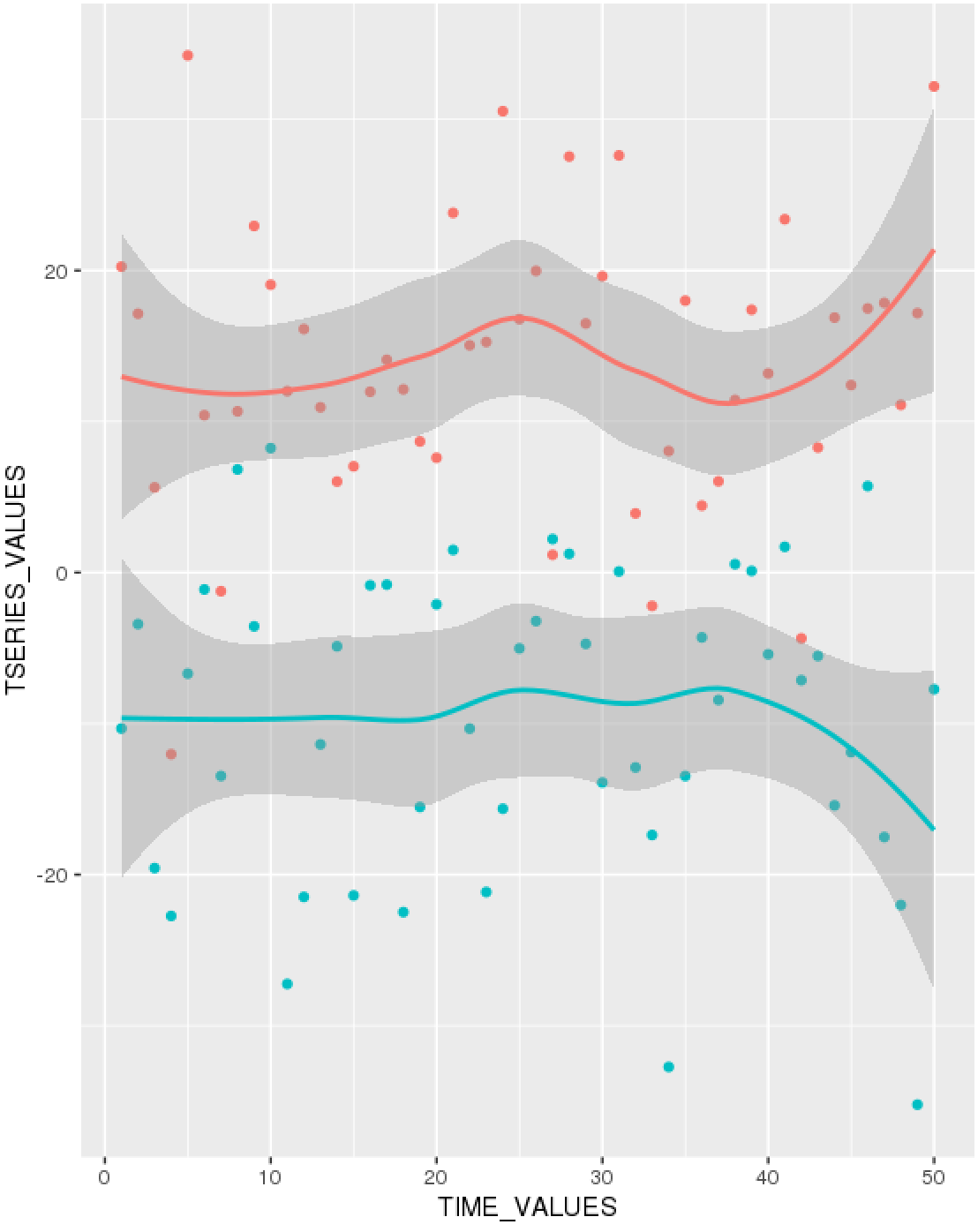}
    \caption{Foreground Time Series Realizations with mean $=10,-10$ for $\tau=10$}
    \label{fig.ar_sd_10}
  \end{minipage}
\end{figure}


 \tb{ The results for clustering using these noisy foregrounds are shown in Table \ref{tab.ts_1d_noise}. The data shows empirically that as the noise level of the foreground increases {\bf kdiff} is more resilient and performs better than {\bf MPdist}. This is because after constructing sliding window based embeddings over the original data, {\bf MPdist} is computed using Euclidean metric based cross-similarities between the embeddings whereas {\bf kdiff} is estimated using kernel based self and cross similarities over the embeddings.}

\begin{table}
\centering
\caption{Clustering Performance for Univariate Time Series dataset with $\tau=10$}
\label{tab.ts_1d_noise}
\begin{tabular}{|w|w|w|w|}
\hline
\tb{Distance \ Measure}  & Total Number of Errors & Percent Error \\
\hline
{\bf kdiff} & {20} &{3.6} \\
\hline
{mmd} & {219} &{39.8} \\
\hline
{{\bf MPdist}} & {64} &{11.6} \\
\hline
{dtwd} & {220} &{40.0} \\
\hline
\end{tabular}
\end{table}

\subsection{Simulation: Matching sub-regions in 3-dimensional Time Series in Spherical Coordinates}
\label{kdiff.sim.multiv_ts}

We generate a 3d multivariate {\it background} dataset $s_B$ as follows. Data $Y_{a_{i}}$ for $i = 1,2,\ldots, 1000$ are simulated using the model (\ref{eq.ts_1d}). \tb{To generate this} the series $W_i$ is constructed via an AR(1) model driven by i.i.d errors $\sim N(0, 1)$. The AR(1) coefficient is set to $0.1$. Similarly data $Y_{b_{i}}$ for $i = 1,2,\ldots, 1000$ are simulated using the model (\ref{eq.ts_1d}). \tb{To generate this} the series $W_i$ is constructed via an AR(1) model driven by i.i.d errors $\sim N(0, 1)$. The AR(1) coefficient is set to $0.1$. 

 \tb{Following the generation of $W_{i}$ values for the data ${\bf Y} = (Y_a, Y_b)$} we form a {\it background} dataset $X_{B_{j}}$ in this 2d space by generating $j = 1,2, \ldots, 21$ realizations of this data ${\bf Y}$ 
where the mean $\mu$ for realization $j$ of each pair is set as below:

\[
    \mu= 
\begin{cases}
    100*j & \text{if } j \geq 1 \ \text{and } \  j \leq 10\\
    100*(10-j) & \text{if } j \geq 11 \ \text{and } \  j \leq 20\\
    0              & \text{j=21}
\end{cases}
\]

 \tb{Our next step involves transforming \tb{these} 21 instances of the 2d backgrounds into a 3d spherical surface of radius 1 as described in the following steps. We first map each series $Y_a$ and $Y_b$ linearly into the region $[0, \pi/2]$.  The corresponding mapped series are denoted as $Y_{a_{s}}$ and $Y_{b_{s}}$ respectively.  To ensure that the {\it backgrounds} are \tb{clearly} separated we divide the region $[0, \pi/2]$ into 21 \tb{nonoverlapping} partitions for this linear mapping. The final background dataset $s_B=\{s_a, s_b, s_c\}$ is derived using Equation (\ref{eq.ts_3d}):}

%

\begin{equation}
\begin{aligned}
s_a &= \sin(Y_{b_{s}})*\cos(Y_{a_{s}})\\
s_b &= \sin(Y_{b_{s}})*\sin(Y_{a_{s}})\\
s_c &= \cos(Y_{b_{s}})
\end{aligned}
\label{eq.ts_3d}
\end{equation}


 Next we generate a 3d foreground dataset $s_F$ consisting of 2 {\it foregrounds} $s_{FA}$ and $s_{FB}$ which will enable forming the 2 classes to be considered for k-medoids clustering as follows. Data $Y_{a_{i}}$ for $i = 1,2,\ldots, 50$ are simulated using the model (\ref{eq.ts_1d}). \tb{To generate this} the series $W_i$ is constructed via an AR(1) model driven by i.i.d errors $\sim N(0, 1)$. The AR(1) coefficient is set to $0.1$ and $\mu = 10$. Similarly data $Y_{b_{i}}$ for $i = 1,2,\ldots, 50$ are simulated using the model (\ref{eq.ts_1d}). \tb{To generate this} the series $W_i$ is constructed via an AR(1) model driven by i.i.d errors $\sim N(0, 1)$. The AR(1) coefficient is set to $0.1$ and $\mu = 10$. we linearly map the original 2d data $(Y_a, Y_b)$ into the region $[\pi/2, 5\pi/8]$ as $(Y_{a_{s}}, Y_{b_{s}})$  and then perform the mapping as given in Equation (\ref{eq.ts_3d}) to form the {\it foreground} $s_{FA}$. The foreground $s_{FB}$ is generated in a similar manner except that $\mu = -10$ and the 2d series is linearly mapped to the region $[3\pi/4, 7\pi/8]$. We form the {\it foreground} dataset $s_{F_{j}}$ by generating $j = 1,2, \ldots, 21$ realizations of this data as follows:

\[
    s_{F_{j}}= 
\begin{cases}
     s_{FA} & \text{if } j \ \text{mod 2 } == 1\\
     s_{FB}  & \text{if } j \ \text{mod 2 } == 0
\end{cases}
\]

 Finally the dataset used for clustering $Z_{ij}$ where $i = 1, 2, \ldots, 1000$ and $j = 1,2, \ldots, 21$ is formed by mixing {\it backgrounds} $s_{B}$ and {\it foregrounds} $s_{F}$ as follows:

\[
    Z_{ij}= 
\begin{cases}
     \sum_{i=1}^{50} s_{F_{ij}} + \sum_{i=51}^{1000} s_{B_{ij}} & \text{if } j \ \text{mod 2 } == 1\\
     \sum_{i=1}^{25} s_{F_{ij}} + \sum_{i=26}^{1000} s_{B_{ij}} & \text{if } j \ \text{mod 2 } == 0
 \end{cases}
\]

 The dataset $Z_{ij}$ formed in this manner consists of \tb{two} types of subregions ({\it foregrounds}) which define the \tb{two} classes used for k-medoids clustering. An illustration of the data on such a spherical surface with $5$ {\it backgrounds} and $2$ {\it foregrounds} is shown in Figure \ref{fig.spherical_5}.

{\begin{figure}[h]
 \centering
\includegraphics[width=2.8 in, height=2.5 in]{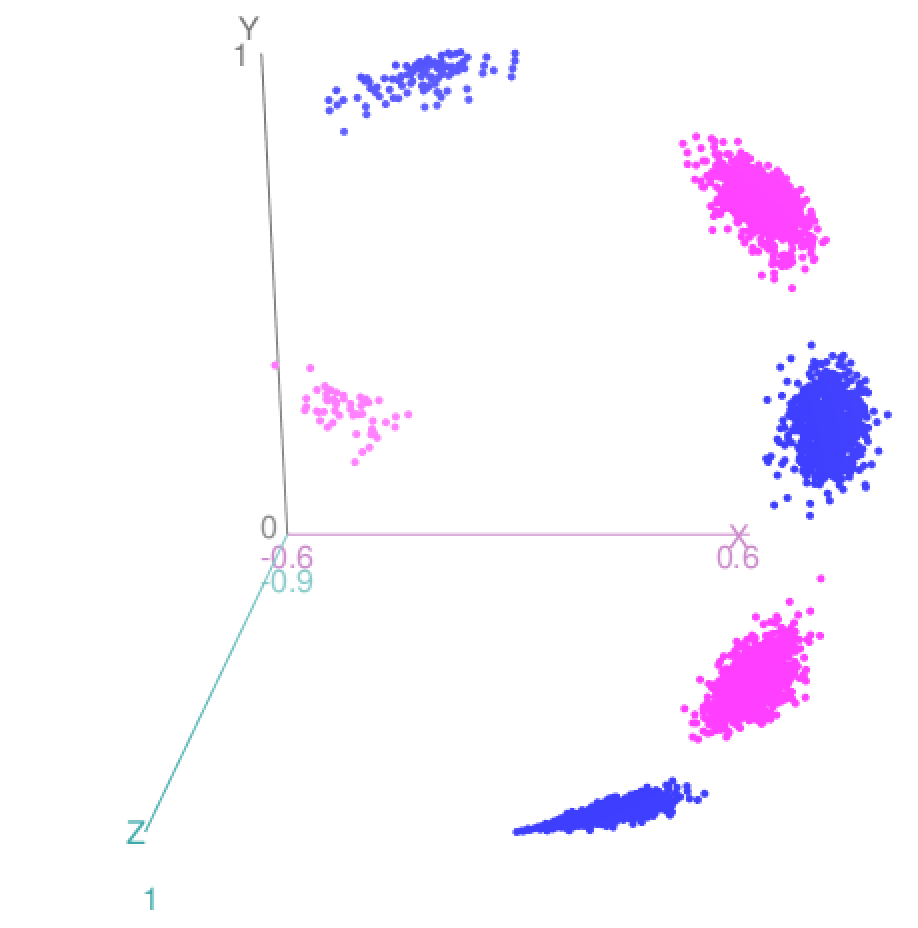}
\caption{Illustration of data consisting of 5 backgrounds and 2 foregrounds on a spherical surface, similar colors indicate association of foregrounds with respective backgrounds}
\label{fig.spherical_5}
\end{figure}}

 We perform 50 random splits of the dataset $Z_{ij}$ where each split consists of a training set of size 10 and a test set of size 11. The results for clustering are shown for the 4 distance measures in Table \ref{tab.ts_3d}.

\begin{table}[h]
\centering
\caption{Clustering Performance for 3d Multivariate Time Series dataset}
\label{tab.ts_3d}
\begin{tabular}{|w|w|w|w|}
\hline
\tb{Distance \ Measure}  & Total Number of Errors & Percent Error \\
\hline
{\bf kdiff} & {0} &{0} \\
\hline
{mmd} & {225} &{40.9} \\
\hline
{{\bf MPdist}} & {141} &{25.6} \\
\hline
{dtwd} & {227} &{41.3} \\
\hline
\end{tabular}
\end{table}

 \tb{From the results it can be seen that {\bf kdiff} produces the best clustering performance with $0$ errors for this dataset. This is attributed to the fact that the subregions of interest are well defined for both classes and using suitable values of parameters determined from training it is possible to accurately cluster all the time series data into \tb{two} separate groups. On the other hand the performance of mmd is inferior to {\bf kdiff}  because the backgrounds are well separated with different mean values for time series within and across the two classes. This results in time series even belonging to the same class to be placed in separate clusters when mmd is used as a \tb{distance measure}. Similarly dtwd suffers from poor performance as this distance measure tends to put time series with smaller separation between the mean background values in the same cluster. However these may have distinct values for the foregrounds i.e. they can in general belong to different classes and as a result this causes errors during clustering. For this dataset the performance of {\bf MPdist} is inferior to {\bf kdiff} even though the former can find matching sub-regions with zero errors in the case of univariate time series. This difference is attributed to the nature of the spherical region over which the sub-region matching is done where the 1-nearest neighbor strategy employed by {\bf MPdist} using Euclidean metrics to construct the distance distribution. In case of spherical surfaces it is necessary to use appropriate geodesic distances for nearest neighbor search as discussed in (\cite{lunga2011spherical}). This issue is resolved in {\bf kdiff} which can find the matching subregion over a non Euclidean region which in this case is a spherical surface thereby giving the most accurate clustering results for this dataset.}

\subsection{Real life example: MNIST-M dataset}
\label{kdiff.real.mnistm}

\tb{The MNIST-M dataset used in \cite{ganin2016domain, haeusser2017associative} was selected as a real-life example to demonstrate the differences in clustering performance using the \tb{four distance measures} {\bf kdiff}, mmd, {\bf MPdist} and dtwd. The MNIST-M dataset consists of MNIST digits \cite{lecun1998gradient} which are difference blended over patches selected from the BSDS500 database of color photos \cite{arbelaez2010contour}. In our experiments where we consider k-medoid clustering over $k=2$ classes we select 10 instances each of the MNIST digits 0 and 1 to be blended with a selection of background images to form our dataset MNIST-M-1. Since BSDS500 is a dataset of color images the components of this dataset are random fields whose dimensions are $28$\ x\ $28$\ x\ $3$. We form our final dataset for clustering consisting of random fields with dimensions $28$\ x$\ 28$ by averaging over all three channels. Examples of individual zero and one digits on different backgrounds for all three channels of MNIST-M-1 are shown in Figures \ref{fig_0_eg1}, \ref{fig_0_eg2}, \ref{fig_1_eg1} and \ref{fig_1_eg2}.}

%

\begin{figure}[H]
  \centering
  \begin{minipage}[b]{0.4\textwidth}
    \includegraphics[width=2.1 in, height=1.8 in]{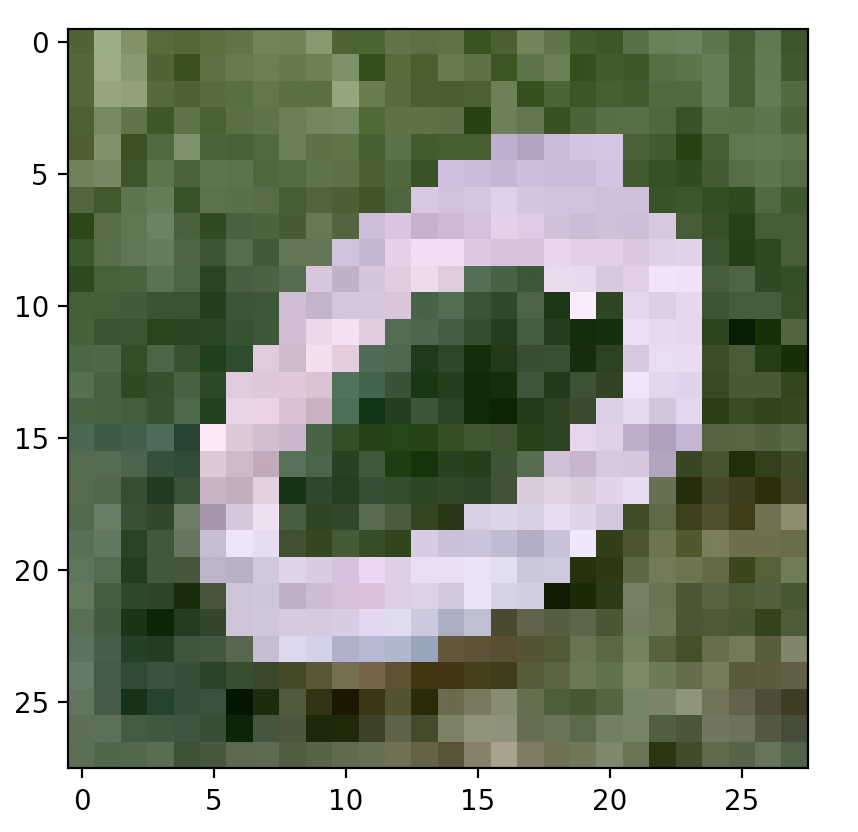}
    \caption{Example 1 of MNIST-M-1 digit zero}
    \label{fig_0_eg1}
  \end{minipage}
  \hfill
  \begin{minipage}[b]{0.4\textwidth}
    \includegraphics[width=2.1 in, height=1.8 in]{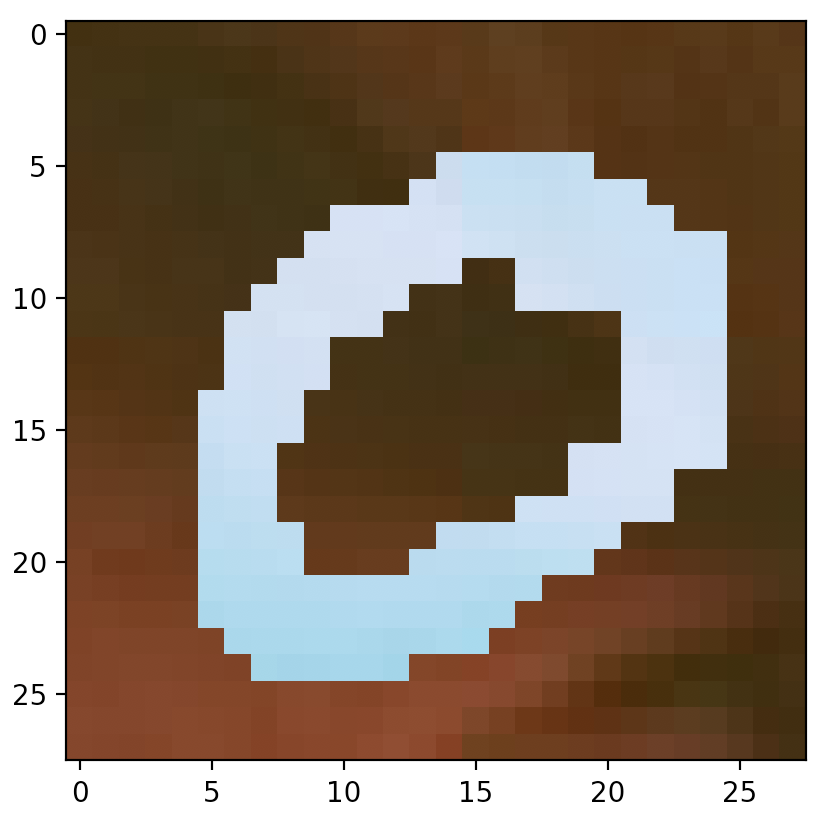}
    \caption{Example 2 of MNIST-M-1 digit zero}
    \label{fig_0_eg2}
  \end{minipage}
\end{figure}

%

\begin{figure}[H]
  \centering
  \begin{minipage}[b]{0.4\textwidth}
    \includegraphics[width=2.1 in, height=1.8 in]{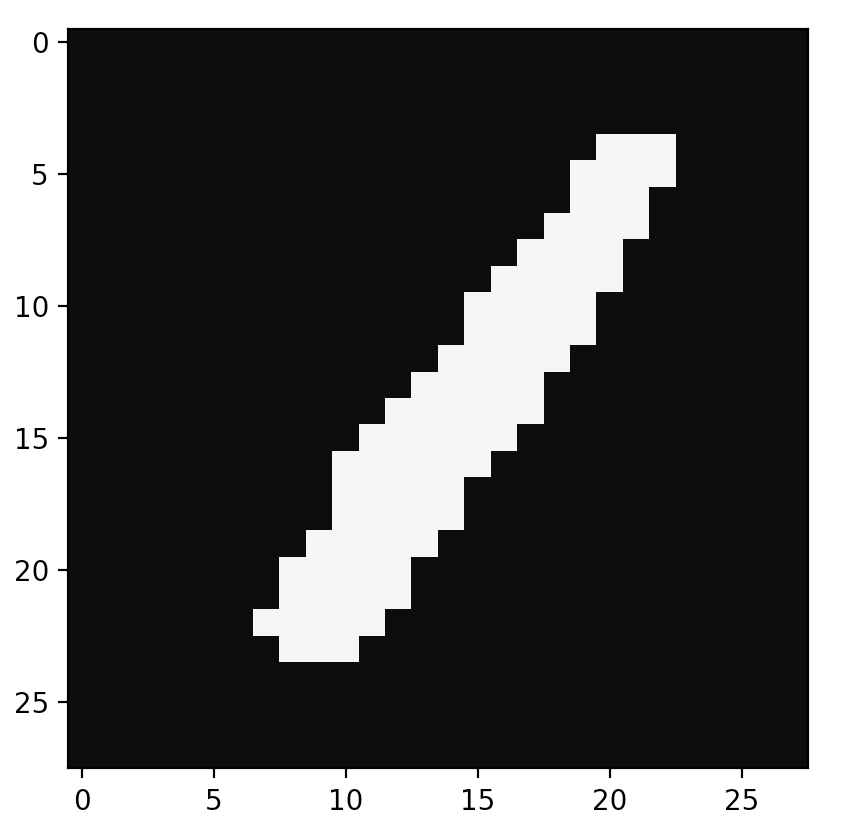}
    \caption{Example 1 of MNIST-M-1 digit one}
    \label{fig_1_eg1}
  \end{minipage}
  \hfill
  \begin{minipage}[b]{0.4\textwidth}
    \includegraphics[width=2.1 in, height=1.8 in]{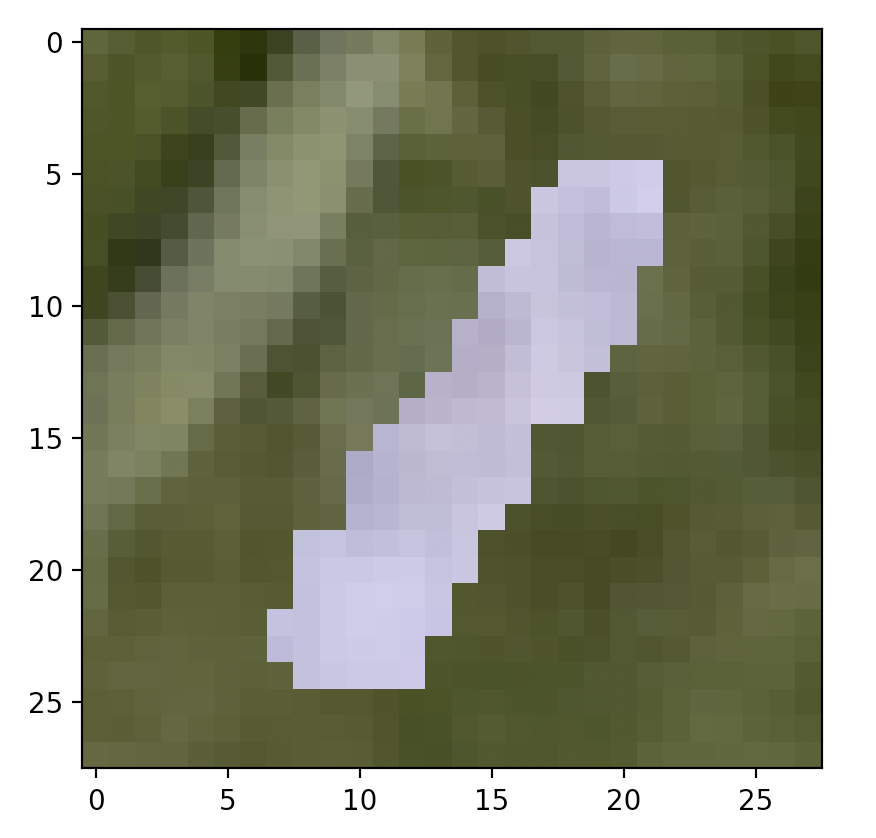}
    \caption{Example 2 of MNIST-M-1 digit one}
    \label{fig_1_eg2}
  \end{minipage}
\end{figure}

 We perform 50 random splits of the dataset where each split consists of a training set of size 10 and a test set of size 10.  The results for clustering are shown for the distance measures in Table \ref{tab.mnistm}.

\begin{table}[h]
\centering
\caption{Clustering Performance for MNIST-M-1}
\label{tab.mnistm}
\begin{tabular}{|w|w|w|w|}
\hline
\tb{Distance \ Measure}  & Total Number of Errors & Percent Error \\
\hline
{\bf kdiff} & {91} &{18.2} \\
\hline
{mmd} & {131} &{26.2} \\
\hline
{{\bf MPdist}} & {68} &{13.6} \\
\hline
{dtwd} & {149} &{29.8} \\
\hline
\end{tabular}
\end{table}

 \tb{From the results it can be seen that for MNIST-M-1 {\bf MPdist} somewhat outperforms our proposed \tb{distance measure} {\bf kdiff} however the latter is superior to both mmd and dtwd. Since in general the background statistics of the MNIST-M images are different, two images belonging to the same class can be placed in separate clusters when mmd is used as a \tb{distance measure} and this causes mmd to underperform versus {\bf kdiff}. Similarly dtwd suffers from poor performance as this distance measure tends to put images with smaller separation between the mean background values in the same cluster. However these may have distinct values for the foregrounds i.e. they can in general belong to different classes and as a result this causes errors during clustering.}




 \tb{ {\bf \it Noise robustness} \ Following the discussion in Section \ref{kdiff.sim.univ_ts} we explore the performance of the distance measures by considering a selection of noisy backgrounds from the BSDS500 database over which the same 10 instances of the MNIST digits 0 and 1 are blended to form a second version of our dataset called MNIST-M-2. Similar to the earlier case we form our final dataset for clustering consisting of random fields with dimensions 28 x 28 by averaging over all three channels of the color image. Examples of individual zero and one digits on different backgrounds for a single channel are shown in Figures \ref{fig_0_n_eg1}, \ref{fig_0_n_eg2}, \ref{fig_1_n_eg1} and \ref{fig_1_n_eg2}. Note that these correspond to the same MNIST digits shown in Figures \ref{fig_0_eg1}, \ref{fig_0_eg2}, \ref{fig_1_eg1} and \ref{fig_1_eg2} however are blended with different backgrounds which have been chosen such that the distinguishability of the two classes is reduced.}

\begin{figure}[H]
  \centering
  \begin{minipage}[b]{0.4\textwidth}
    \includegraphics[width=2.1 in, height=1.8 in]{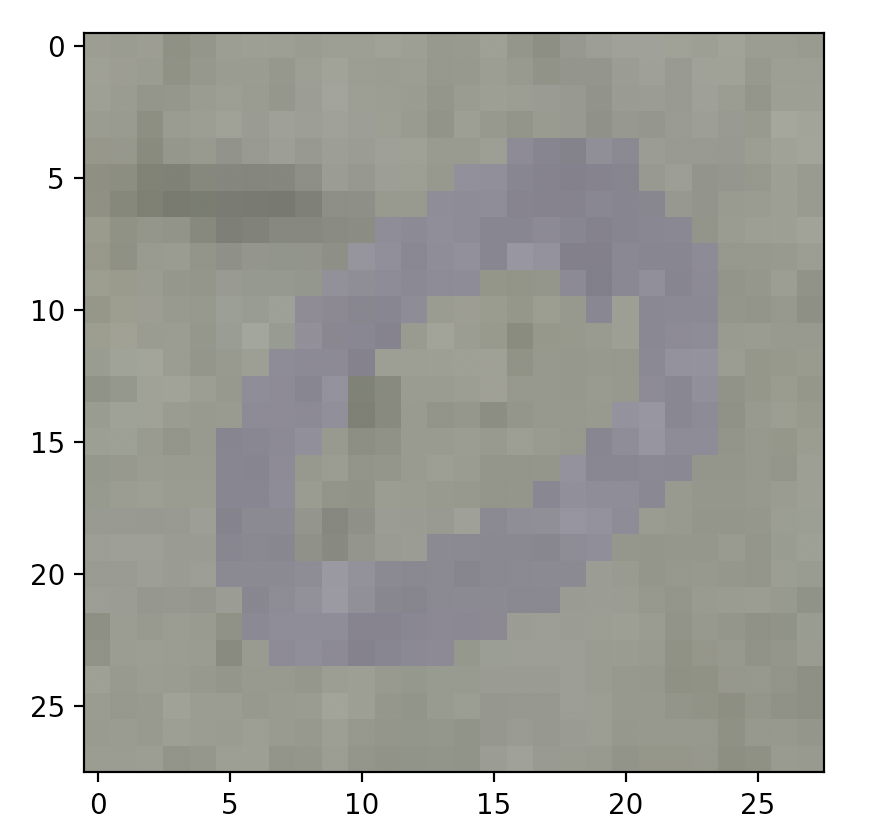}
    \caption{Example 1 of MNIST-M-2 digit zero}
    \label{fig_0_n_eg1}
  \end{minipage}
  \hfill
  \begin{minipage}[b]{0.4\textwidth}
    \includegraphics[width=2.1 in, height=1.8 in]{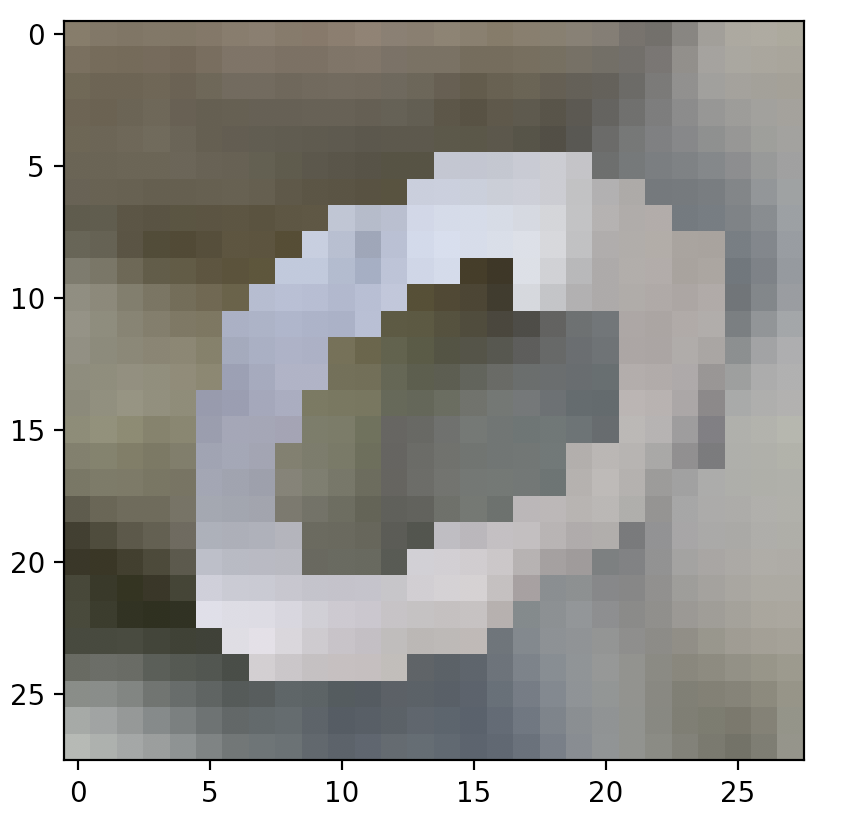}
    \caption{Example 2 of MNIST-M-2 digit zero}
    \label{fig_0_n_eg2}
  \end{minipage}
\end{figure}

%

\begin{figure}[H]
  \centering
  \begin{minipage}[b]{0.4\textwidth}
    \includegraphics[width=2.1 in, height=1.8 in]{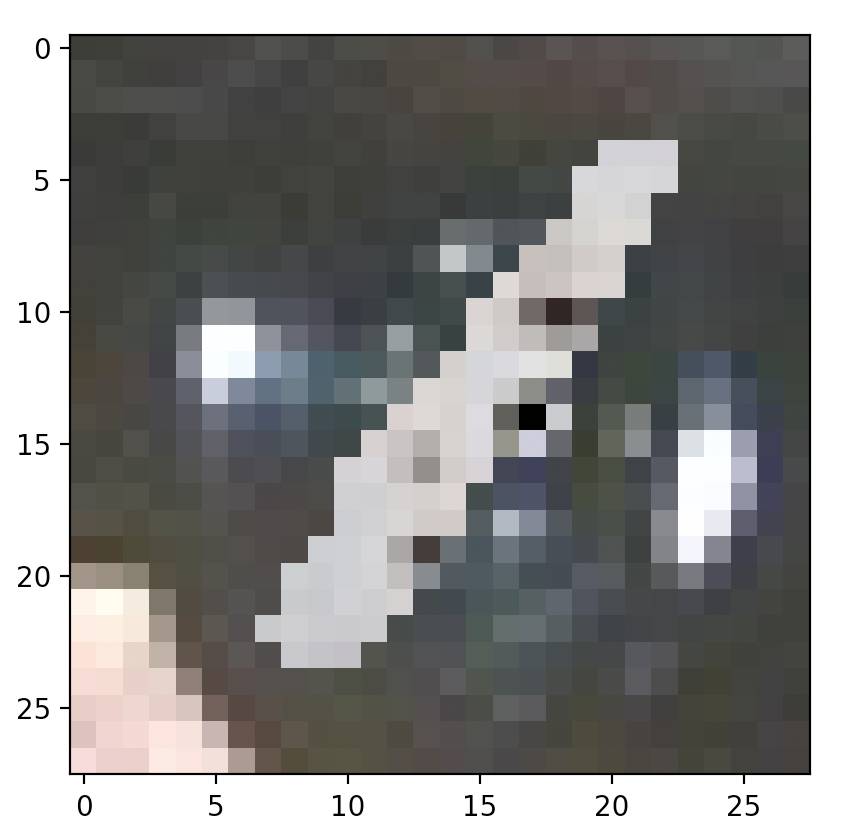}
    \caption{Example 1 of MNIST-M-2 digit one}
    \label{fig_1_n_eg1}
  \end{minipage}
  \hfill
  \begin{minipage}[b]{0.4\textwidth}
    \includegraphics[width=2.1 in, height=1.8 in]{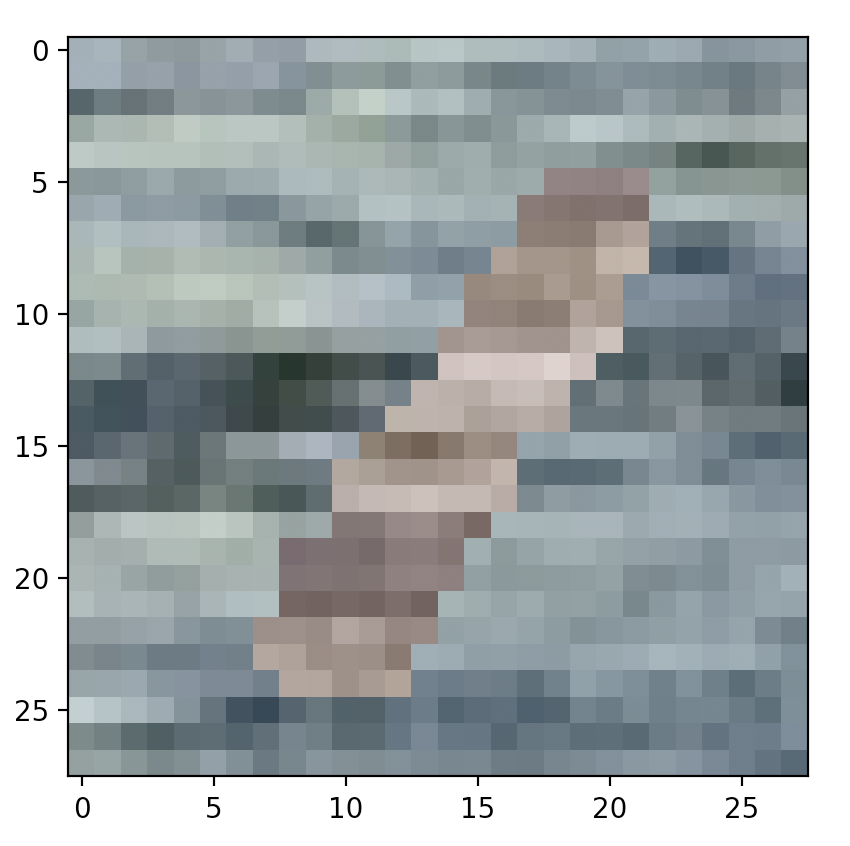}
    \caption{Example 2 of MNIST-M-2 digit one}
    \label{fig_1_n_eg2}
  \end{minipage}
\end{figure}

 \tb{ We use the Kolmogorov-Smirnov (KS) test statistic to characterize the differences between the backgrounds (BSDS500 images) and the foregrounds (MNIST digits 0 and 1) as below:}

\begin{itemize}
\item \tb{The mean KS statistic between the distribution of the pixels where a digit 0 is present and the distribution of the pixel values which make up the background ({\bf KS-bg-fg-0})}
\item \tb{The mean KS statistic between the distribution of the pixels where a digit 1 is present and the distribution of the pixel values which make up the background ({\bf KS-bg-fg-1})}
\item \tb{The mean KS statistic between pairs of distributions which make up the corresponding backgrounds ({\bf KS-bg})}
\end{itemize}

 \tb{The KS values shown in Table \ref{tab.mnistm_ks} confirm our visual intuition that the distinguishability of the foreground (MNIST 0 and 1 digits) and the background is less for MNIST-M-2 as compared to MNIST-M-1. Additionally it can be seen that for the noisier dataset MNIST-M-2 the separation between the background distribution of pixels is less than that of MNIST-M-1.}

\begin{table}[h]
\centering
\caption{KS statistic for MNIST-M foregrounds and backgrounds}
\label{tab.mnistm_ks}
\begin{tabular}{|w|w|w|w|}
\hline
{Dataset}  & {\bf KS-bg-fg-0} & {\bf KS-bg-fg-1} & {\bf KS-bg} \\
\hline
{MNIST-M-1} & {0.998} &{0.991} & {0.358} \\
\hline
{MNIST-M-2} & {0.889} &{0.754}  & {0.202} \\
\hline
\end{tabular}
\end{table}

 \tb{We perform 50 random splits of the dataset $Z$ where each split consists of a training set of size 10 and a test set of size 10.  The results for clustering are shown for the distance measures in Table \ref{tab.mnistm_noisy}.}

\begin{table}
\centering
\caption{Clustering Performance for MNIST-M}
\label{tab.mnistm_noisy}
\begin{tabular}{|w|w|w|w|}
\hline
\tb{Distance \ Measure}  & Total Number of Errors & Percent Error \\
\hline
{\bf kdiff} & {183} &{36.6} \\
\hline
{mmd} & {186} &{37.2} \\
\hline
{{\bf MPdist}} & {197} &{39.4} \\
\hline
{dtwd} & {197} &{39.4} \\
\hline
\end{tabular}
\end{table}


 \tb{From the results it can be seen that for this noisy dataset the clustering accuracy results for all four distance measures are lower as expected, however  {\bf kdiff}  slightly outperforms {\bf MPdist}. As discussed in Section \ref{kdiff.sim.univ_ts} this can be attributed to the fact that in such cases with a lower signal to noise ratio between the foreground and the background {\bf kdiff} which is estimated using kernel based self and cross similarities over the embeddings can outperform {\bf MPdist} which is computed using only Euclidean metric based cross-similarities over the embeddings. The expected noise characterizaion is confirmed by our KS statistic values of {\bf KS-bg-fg-0} and {\bf KS-bg-fg-1}  in Table \ref{tab.mnistm_noisy}. Moreover the lower values of the KS statistic value {\bf KS-bg} for MNIST-M-2 compared to MNIST-M-1 manifest in similar clustering performances of mmd and kdiff for MNIST-M-2 in contrast with the trends for MNIST-M-1.}


 \tb{{\bf \it Additional comments} \ For {\bf kdiff} we used $L = SL * SL$ windows for capturing the image sub-regions leading to $(n - SL + 1)^2$ embeddings which were subsequently "flattened" to form subsequences of size $L = SL^2$ over which {\bf kdiff} was estimated using a one dimensional Gaussian kernel. This process can be augmented by estimating {\bf kdiff} with two dimensional anisotropic Gaussian kernels to improve performance. However this augmented method of {\bf kdiff} estimation  using a higher dimensional kernel with more parameters will significantly increase the computation time and implementation complexity. Note that in the case of {\bf MPdist} flattening the subregion is not as much of an issue since it does not use kernel based estimations which need accurate bandwidths.}

\section{Conclusions and Future Work}
\label{kdiff.conclusions}

 \tb{In this work we have proposed a kernel-based measure {\bf kdiff} for estimating distances between time series, random fields and similar univariate or multivariate and possibly non-iid data. Such a distance measure can be used for clustering and classification in applications where data belonging to a given class match only partially over their region of support. In such cases {\bf kdiff} is shown to outperform both Maximum Mean Discrepancy and Dynamic Time Warping based distance measures for both synthetic and real-life datasets. We also compare the performance of {\bf kdiff}  which is constructed using kernel-based embeddings over the given data versus {\bf MPdist} which uses Euclidean distance based embeddings. In this case we empirically demonstrate that for data with high signal-to-noise ratio between the matching region and the background both {\bf kdiff} and {\bf MPdist} perform equally well for synthetic datasets and {\bf MPdist} somewhat outperforms {\bf kdiff} for real life MNIST-M data. For data where the foreground is less distinguishable versus the background {\bf kdiff} outperforms {\bf MPdist} for both synthetic and real-life datasets. Additionally for multivariate time series on a spherical manifold we show that {\bf kdiff} outperforms {\bf MPdist} because of its kernel-based construction which leads to superior performance in non Euclidean spaces. Our future work will focus on application of {\bf kdiff} for applications on spherical manifolds such as text embedding \cite{meng2019spherical} and hyperspectral imagery \cite{lunga2011spherical, lunga2011unsupervised} as well as clustering and classification applications for time series and random fields with noisy motifs and foregrounds.}


\section{Acknowledgements}
\label{kdiff.ack}

 This work was supported in part by NSF awards CNS1730158, ACI-1540112, ACI-1541349, OAC-1826967, the University of California Office of the President, and the California Institute for Telecommunications and Information Technology’s Qualcomm Institute (Calit2-QI). Thanks to CENIC for the 100Gpbs networks. \tb{The authors thank Siqiao Ruan for helpful discussions}.  SD was partially supported by an Intel Sponsored Research grant.   AC was supported by NSF awards 1819222, 2012266, Russel Sage Founation 2196, and an Intel Sponsored Research grant. HNM is supported in part by NSF grant DMS 2012355 and ARO Grant W911NF2110218.



\bibliography{srinjoy_stats}
\bibliographystyle{IEEEtran}

\end{document}